\numberwithin{equation}{section}
\newcommand{\algofont}[1]{\textsc{#1}}
\newcommand{\algoname}{{\algofont{DOG}}\xspace}
\newcommand{\andreas}[1]{\textcolor{red}{\bf Andreas: #1}}
\newcommand{\OGunit}{$\algofont{OG}_{\algofont{unit}}$\xspace}
\newcommand{\cO}{\mathcal{O}}
\DeclareMathOperator{\EMSE}{EMSE}
\DeclareMathOperator{\MSE}{MSE}
\newcommand{\fEMSE}{f_{\EMSE}}
\newcommand{\WMR}{{\algofont{WMR}}\xspace}
\newcommand{\EXPthree}{{\algofont{EXP3}}\xspace}
\newcommand{\lamexp}{\ensuremath{\omega}} %
\newcommand{\numsensors}{\ensuremath{n}}
\newcommand{\probnull}{\ensuremath{\beta}}
\newcommand{\gout}{\ensuremath{G}}
\newcommand{\gpay}{\ensuremath{\psi}}
\newcommand{\paymaxact}[1]{\ensuremath{\max\paren{\payoff_{(A
        \setminus {#1})}}}}
\newcommand{\estimate}[1]{\bar{#1}}
\newcommand{\boost}{\ensuremath{\lambda}}
\newcommand{\thresh}{\ensuremath{\tau}}
\newcommand{\actcost}{\ensuremath{c}}
\newcommand{\oddog}{\algofont{OD-DOG}\xspace}
\newcommand{\dog}{\algofont{DOG}\xspace}
\newcommand{\lazydog}{\algofont{lazyDOG}\xspace}
\newcommand{\id}{\ensuremath{\operatorname{id}}}
\newcommand{\sensor}{\ensuremath{v}}
\newcommand{\sensors}{V}
\newcommand{\OPT}{\textsf{OPT}}
\newcommand{\NULL}{\textsf{null}}
\newcommand{\obj}{\ensuremath{f}} %
\newcommand{\K}{\ensuremath{k}}
\newcommand{\event}{\ensuremath{\mathcal{E}}}
\newcommand{\Alg}{\ensuremath{\mathcal{E}}}
\newcommand{\regret}{\ensuremath{R}}
\newcommand{\payoff}{\ensuremath{\pi}}
\newcommand{\psamp}{\ensuremath{\hat{p}}}
\newcommand{\Bernoulli}[1]{\ensuremath{\operatorname{Bernoulli}({#1})}}
\newcommand{\Binomial}[1]{\ensuremath{\operatorname{Binomial}({#1})}}
\newcommand{\Poisson}[1]{\ensuremath{\operatorname{Poisson}({#1})}}
\newcommand{\LimitProtocol}[0]{Poisson Multinomial Sampling
  Protocol\xspace}
\newcommand{\LimitProtocolShort}[0]{PMS Protocol\xspace}
\newcommand{\LimitProtocolShortns}[0]{PMS Protocol}
\newcommand{\mueager}{\ensuremath{\mu_{\text{eager}}}}
\newcommand{\mulazy}{\ensuremath{\mu_{\text{lazy}}}}
\newcommand{\ignore}[1]{}
\newcommand{\commentout}[1]{}
\newcommand{\cf}{\emph{cf.}}
\def \etal {\emph{et al.}\ }
\newcommand{\ith}{\ensuremath{i^{\mathrm{th}}}\xspace}
\newcommand{\citet}[1]{\cite{#1}}
\newcommand{\citep}[1]{\cite{#1}}
\newcommand{\figref}[1]{Fig.~\ref{#1}}
\newcommand{\vfigref}[1]{Fig.~\vref{#1}}
\newcommand{\secref}[1]{Sec.~\ref{#1}}
\newcommand{\propref}[1]{Prop.~\ref{#1}}
\newcommand{\denselist}{
    \itemsep -3pt\topsep-8pt\partopsep-8pt
}
\newcommand{\initOneLiners}{%
    \setlength{\itemsep}{0pt}
    \setlength{\parsep }{0pt}
    \setlength{\topsep }{0pt}
}
\newenvironment{OneLiners}[1][\ensuremath{\bullet}]
    {\begin{list}
        {#1}
        {\initOneLiners}}
    {\end{list}}
\newcommand{\nats}{\ensuremath{\mathbb{N}}}
\newcommand{\reals}{\ensuremath{\mathbb{R}}}
\newcommand{\Real}{\mathbb R}
\newcommand{\eps}{\varepsilon}
\newcommand{\class} [1] {\textrm{#1}} %
\renewcommand{\P} {\class{P}}
\newcommand{\NP} {\class{NP}}
\def \argmax {\mathop{\rm arg\,max}}
\newcommand{\paren} [1] {\ensuremath{ \left( {#1} \right) }}
\newcommand{\set}[1]{\ensuremath{\left\{#1\right\}}}
\newcommand{\prob}[1]{\ensuremath{\mathbf{Pr}\left[#1\right] }}
\newcommand{\E}[1]{\ensuremath{\mathbb{E}\left[#1\right] }}
\newcommand{\Econd}[2]{\ensuremath{\mathbb{E}\left[  {#1} \ \middle| \ {#2}\right] }}  %
\newcommand{\ceil}[1]{\ensuremath{\left\lceil#1\right\rceil}}
\newcommand{\tuple}[1]{\ensuremath{\langle #1 \rangle}}
\renewcommand{\paragraph}[1]{\vspace{3mm}\noindent{\bf #1.} }
\newtheorem{lemma}{Lemma}
\newtheorem{theorem}{Theorem}
\newtheorem{cor}[theorem]{Corollary}
\newtheorem{prop}[theorem]{Proposition}
\newcommand{\cA}{{\mathcal{A}}}
\newcommand{\cB}{{\mathcal{B}}}
\newcommand{\cX}{{\mathcal{X}}}
\newcommand{\bx}{{\mathbf{x}}}
\def\more-auths{%
\end{tabular}
\begin{tabular}{c}}
\begin{document}

\newcommand{\instfont}[1]{\textrm{#1}}
\title{Online Distributed Sensor Selection}%
\author{Daniel Golovin\\Caltech %
\and Matthew Faulkner\\Caltech %
\and Andreas Krause\\Caltech %
}

\maketitle

\begin{abstract}
A key problem in sensor networks is to decide which sensors to query
when, in order to obtain the most useful information (e.g., for performing
accurate prediction), subject to constraints (e.g., on power and
bandwidth). In many applications the utility function is not known a
priori, must be learned from data, and can even change over time.
Furthermore for large sensor networks solving a centralized
optimization problem to select sensors is not feasible, and thus we
seek a fully distributed solution.  In this paper, we present
\emph{Distributed Online Greedy} (\dog), an efficient, distributed
algorithm for repeatedly selecting sensors online, only receiving
feedback about the utility of the selected sensors.  We prove very
strong theoretical no-regret guarantees that apply whenever the
(unknown) utility function satisfies a natural diminishing returns
property called \emph{submodularity}.  Our algorithm has extremely low
communication requirements, and scales well to large sensor
deployments. We extend \dog to allow observation-dependent sensor
selection.  We empirically demonstrate the effectiveness of our
algorithm on several real-world sensing tasks.
\end{abstract}

\ifthenelse{\boolean{istechrpt}}{\vspace{-0.5em}}
{}
\category{C.2.1}{Computer-Communication
Networks}{Network Architecture and Design}\category{G.3}{Probability
and Statistics}{Experimental Design} \category{I.2.6}{AI}{Learning}
\vspace{-2mm}
 \terms{Algorithms,
Measurement}  
\vspace{-2mm}\keywords{Sensor networks, approximation algorithms,
   distributed multiarmed bandit algorithms, submodular optimization}

\section{Introduction}
\label{sec:intro}

A key challenge in deploying sensor networks for real-world
applications such as environmental monitoring \cite{krause07jmlr},
building automation \cite{singhvi05intelligent} and others is to
decide when to activate the sensors in order to obtain the most useful
information from the network (e.g., accurate predictions at unobserved
locations) and to minimize power consumption. This sensor selection
problem has received considerable attention
\cite{abrams04set,zhao02IDSQ,vldb04}, and algorithms with performance
guarantees have been developed \cite{abrams04set,krause05near}.
However, many of the existing approaches make simplifying
assumptions. Many approaches assume (1) that the sensors can perfectly
observe a particular sensing region, and nothing outside the region
\cite{abrams04set}.  This assumption does not allow us to model
settings where multiple noisy sensors can help each other obtain
better predictions.  There are also approaches that base their notion
of utility on more detailed models, such as improvement in prediction
accuracy w.r.t.~some statistical model \cite{vldb04} or detection
performance \cite{krause08efficient}.  However, most of these
approaches make two crucial assumptions: (2) The model, upon which the
optimization is based, is known in advance (e.g., based on domain
knowledge or data from a pilot deployment) and (3), a centralized
optimization selects the sensors (i.e., some centralized processor
selects the sensors which obtain highest utility w.r.t. the model).
We are not aware of any approach that simultaneously addresses the
three main challenges (1), (2) and (3) above and still provides
theoretical guarantees.

In this paper, we develop an efficient algorithm, called
\emph{Distributed Online Greedy} (\algoname), which addresses these
three central challenges.  Prior work \cite{krause07nearoptimal} has
shown that many sensing tasks satisfy an intuitive diminishing returns
property, submodularity, which states that activating a new sensor
helps more if few sensors have been activated so far, and less if many
sensors have already been activated.  Our algorithm applies to any
setting where the true objective is submodular \cite{nemhauser78},
thus capturing a variety of realistic sensor models. Secondly, our
algorithm does not require the model to be specified in advance: it
learns to optimize the objective function in an online manner.
Lastly, the algorithm is distributed; the sensors decide whether to
activate themselves based on local information.  We analyze our
algorithm in the no-regret model, proving convergence properties
similar to the best bounds for any centralized solution.

\paragraph{A bandit approach toward sensor selection} 
At the heart of our approach is a novel distributed algorithm for
multiarmed bandit (MAB) problems.  In the classical multiarmed bandit
\cite{robbins52some} setting, we picture a slot machine with multiple
arms, where each arm generates a random payoff with unknown mean.  Our
goal is to devise a strategy for pulling arms to maximize the total
reward accrued.  The difference between the optimal arm's payoff and
the obtained payoff is called the \emph{regret}. Known algorithms can achieve
average per-round regret of $\cO(\sqrt{n \log n} / \sqrt{T})$ where
$n$ is the number of arms, and $T$ the number of rounds (see e.g. the
survey of \cite{foster99}).  Suppose we would like to, at every time step,
select $k$ sensors.  The sensor selection problem can then be cast as
a multiarmed bandit problem, where there is one arm for each possible
set of $k$ sensors, and the payoff is the accrued utility for the
selected set.  Since the number of possible sets, and thus the number
of arms, is exponentially large,
the resulting regret bound is $\cO(n^{k/2}\sqrt{\log n} /
\sqrt{T})$, i.e., exponential in $k$.  However, when the utility
function is submodular, the payoffs of these arms are correlated.
Recent results \cite{streeter08} show that this correlation due to
submodularity can be exploited by reducing the $n^{k}$-armed bandit
problem to $k$ separate $n$-armed bandit problems, with only a bounded loss in
performance. 
\ignore{
 However, the existing no-regret algorithms for bandit
 optimization are centralized in nature: A centralized processor
 maintains a set of ``weights'' associated with each arm, which depend
 on the performance of the arms in past rounds.  Arms are then pulled
 at random by interpreting the weights as probabilities.The key
 question in distributed online submodular sensing is thus how to
 perform this sampling in a distributed fashion.  In this paper, we
 develop a scheme where each sensor maintains their own weights, and
 activates itself \emph{independently from all other sensors} purely
 depending on this weight. 
 }
Existing bandit algorithms, such as the widely used \EXPthree
algorithm \cite{auer02}, are centralized in nature. Consequently, the
key challenge in distributed online submodular sensing is how to
devise a distributed bandit algorithm. In Sec. \ref{sec:distributed}
and \ref{sec:lazy-renorm}, we develop a distributed variant of EXP3
using novel algorithms to sample from and update a probability
distribution in a distributed way. Roughly, we develop a scheme where
each sensor maintains its own weight, and activates itself
\emph{independently from all other sensors} purely depending on this
weight.

\paragraph{Observation specific selection} 
\ignore{
Another key question not addressed in centralized sensor selection is
the fact that sensors have more information about whether they should
become activated.
}
A shortcoming of centralized sensor selection is that the individual
sensors' current measurements are not considered in the selection
process.  In many applications, obtaining sensor measurements is 
less costly than transmitting the measurements across the network. For
example, cell phones used in participatory sensing
\cite{burke06participatory} can inexpensively obtain measurements on a
regular basis, but it is expensive to constantly communicate
measurements over the network. In
Sec. \ref{sec:measure-dependent-sampling}, we extend our distributed
selection algorithm to activate sensors depending on their
observations, and analyze the tradeoff between power consumption
and the utility obtained under observation specific activation.
\ignore{
when using cell phones in participatory sensing
\cite{burke06participatory}, it is inexpensive to obtain measurements
(such as temperature, GPS readings, etc.) on a regular basis, but
expensive to constantly communicate the measurements over the network.
In such a setting, one would like to activate sensors depending on
their observations.  We extend our algorithm to allow for observation
specific activation, and analyze the tradeoff between power
consumption and obtained utility.
}

\paragraph{Communication models}
We analyze our algorithms under two models of communication cost:
In the \emph{broadcast} model, each sensor can broadcast a
message to all other sensors at unit cost.
In the \emph{star network} model, messages can only be between a
  sensor and the base station, and each message has unit cost.
In \secref{sec:distributed} we formulate and analyze a distributed
algorithm for sensor selection under the simpler broadcast
model. Then, in \secref{sec:lazy-renorm} we show how the algorithm can
be extended to the star network model.

\paragraph{Our main contributions}
\begin{OneLiners}
\item Distributed \EXPthree, a novel distributed implementation of the
  classic multiarmed bandit algorithm.
\item Distributed Online Greedy (\dog) and \lazydog, novel algorithms for
  distributed online sensor selection, which apply to many settings,
  only requiring the utility function to be submodular.
\item \oddog, an extension of \dog to allow for
  observation-dependent selection.
\item We analyze our algorithm in the no-regret model and prove that
  it attains the optimal regret bounds attainable by any efficient
  centralized algorithm.
\item We evaluate our approach on several real-world sensing tasks including monitoring a 12,527 node network. 
\end{OneLiners}

\vspace{1em}
Finally, while we do not consider multi-hop or general
network topologies in this paper, 
we believe that the ideas behind our algorithms will likely prove
valuable for sensor selection in those models as well.

\section{The Sensor Selection Problem} \label{sec:problem}
We now formalize the sensor selection problem.  Suppose a network of
sensors has been deployed at a set of locations $\sensors$ with the
task of monitoring some phenomenon (e.g., temperature in a
building). 
Constraints on communication bandwidth or battery power typically
require us to select a subset $\cA$ of these sensors for activation,
according to some utility function.  The activated sensors then send
their data to a server (base station).  We first review the
traditional offline setting where the utility function is specified in
advance, illustrating how submodularity allows us to obtain provably near-optimal selections.
We then address the more challenging setting where the utility
function must be learned from data in an online manner.

\subsection{The Offline Sensor Selection Problem}
\label{sec:offline-problem}
A standard offline sensor selection algorithm chooses a set of sensors
that maximizes a known sensing quality objective function $f(\cA)$,
subject to some constraints,
e.g., on the number of activated sensors. One possible choice for the
sensing quality is based on prediction accuracy (we will discuss other
possible choices later on).  In many applications, measurements are
correlated across space, which allows us to make predictions at the
unobserved locations.  For example, prior work \cite{vldb04} has
considered the setting where a random variable $\cX_{s}$ is associated
with every location $s\in\sensors$, and a joint probability
distribution $P(\cX_{\sensors})$ models the correlation between sensor
values.  Here, $\cX_{\sensors}=[\cX_{1},\dots,\cX_{\numsensors}]$ is
the random vector over all measurements.  If some measurements
$\cX_{\cA}=\bx_{\cA}$ are obtained at a subset of locations, then the
conditional distribution
$P(\cX_{\sensors\setminus\cA}\mid\cX_{\cA}=\bx_{\cA})$ allows
predictions at the unobserved locations, e.g., by predicting
$\mathbb{E}[\cX_{\sensors\setminus\cA}\mid\cX_{\cA}=\bx_{\cA}]$.
Furthermore, this conditional distribution quantifies the
\emph{uncertainty} in the prediction: Intuitively, we would like to
select sensors that minimize the predictive uncertainty.  One way to
quantify the predictive uncertainty is the mean squared prediction
error,
$$\MSE(\cX_{\sensors\setminus\cA}\mid \bx_{\cA}) =
\frac{1}{\numsensors}\sum_{s\in\sensors\setminus\cA}
\mathbb{E}[(\cX_s-\mathbb{E}[\cX_{s}\mid\bx_{\cA}])^{2}\mid
\bx_{\cA}].$$ In general, the measurements $\bx_{\cA}$ that sensors
$\cA$ will make is not known in advance.  Thus, we can base our
optimization on the \emph{expected mean squared prediction error},
$$\EMSE(\cA) = \int dp(\bx_{\cA}) \MSE(\cX_{\sensors\setminus\cA}\mid \bx_{\cA}).$$
Equivalently, we can maximize the \emph{reduction} in mean squared prediction error,
$$\fEMSE(\cA) = \EMSE(\emptyset)-\EMSE(\cA).$$ By definition,
$\fEMSE(\emptyset)=0$, i.e., no sensors obtain no
utility. Furthermore, $\fEMSE$ is monotonic: if
$\cA\subseteq\cB\subseteq\sensors$, then $\fEMSE(\cA)\leq
\fEMSE(\cB)$, i.e., adding more sensors always helps.  That means,
$\fEMSE$ is maximized by the set of all sensors $\sensors$.  However,
in practice, we would like to only select a small set of, e.g., at
most $k$ sensors due to bandwidth and power constraints:
$$\cA^*=\argmax_{\cA}\fEMSE(\cA)\text{ s.t. }|\cA|\leq k.$$
Unfortunately, this optimization problem is NP-hard, so we cannot
expect to efficiently find the optimal solution.  Fortunately, it can
be shown \cite{das07} that in many settings\footnote{For Gaussian
models and conditional suppressorfreeness \cite{das07}}, the function
$\fEMSE$ satisfies an intuitive diminishing returns property called
submodularity.  A set function $f:2^{\sensors}\rightarrow\mathbb{R}$
is called \emph{submodular}
if, for all
$\cA\subseteq\cB\subseteq\sensors$ and $s\in\sensors\setminus \cB$ it
holds that $f(\cA\cup\{s\})-f(\cA)\geq f(\cB\cup\{s\})-f(\cB)$. Many
other natural objective functions for sensor selection satisfy
submodularity as well \cite{krause07nearoptimal}.  For example, the
\emph{sensing region} model where $\obj_{REG}(\cA)$ is the total area
covered by all sensors $\cA$ is submodular.  The \emph{detection}
model where $\obj_{DET}(\cA)$ counts the expected number of targets
detected by sensors $\cA$ is submodular as well.

A fundamental result of Nemhauser et al.~\cite{nemhauser78} is that
for monotone submodular functions, a simple greedy algorithm, which
starts with the empty set $\cA_{0}=\emptyset$ and iteratively adds the
element $$s_{k}=\argmax_{s\in\sensors\setminus \cA_{k-1}}
f(\cA_{k-1}\cup\{s\});\ \ \cA_{k}=\cA_{k-1}\cup\{s_{k}\}$$ which maximally improves the utility obtains a
near-optimal solution: For the set $\cA_{k}$ it holds that
$$f(\cA_{k})\geq (1-1/e)\max_{|\cA|\leq k}f(\cA),$$ i.e., the greedy
solution obtains at least a constant fraction of $(1-1/e)\approx 63\%$
of the optimal value.

\looseness -1 One fundamental problem with this offline approach is that it requires the function $f$ to be specified in advance, i.e., before running the greedy algorithm.  For the function $\fEMSE$, this means that the probabilistic model $P(\cX_{\sensors})$ needs to be known in advance.  While for some applications some prior data, e.g., from pilot deployments, may be accessible, very often no such prior data is available.  This leads to a ``chicken-and-egg'' problem, where sensors need to be activated to collect data in order to learn a model, but also the model is required to inform the sensor selection.  This is akin to the ``exploration--exploitation tradeoff'' in reinforcement learning \cite{auer02}, where an agent needs to decide whether to explore and gather information about effectiveness of an action, or to exploit, i.e., choose actions known to be effective. In the following, we devise an online monitoring scheme based on this analogy.

\subsection{The Online Sensor Selection Problem}  \label{sec:online-problem}
We now consider the more challenging problem where the objective
function is not specified in advance, and needs to be learned during
the monitoring task.  We assume that we intend to monitor the
environment for a number $T$ of time steps (rounds).  In each round
$t$, a set $S_{t}$ of sensors is selected, and these sensors transmit
their measurements to a server (base station).
The server then determines a sensing quality $\obj_t(S_t)$
quantifying the utility obtained from the resulting analysis. For
example, if our goal is spatial prediction, the server would build a
model based on the previously collected sensor data, pick a random
sensor $s$, make prediction for the variable $\cX_{s}$, and then
compare the prediction $\mu_{s}$ with the sensor reading $x_{s}$.  The
error $\obj_{t}=\sigma_{s}^{2}-(\mu_{s}-x_{s})^{2}$ is an unbiased
estimate of the reduction in EMSE.  In the following analysis, we will
only assume that the objective functions $\obj_{t}$ are bounded (w.l.o.g., take values in
$[0,1]$), monotone, and submodular, and that we have some way of
computing $\obj_{t}(S)$ for any subset of sensors $S$.
Our goal is to maximize the total reward obtained by the system 
over $T$ rounds, $\sum_{t=1}^T \obj_t(S_t)$.

We seek to develop a protocol for selecting the sets $S_{t}$ of
sensors at each round, such that after a small number of rounds the
average performance of our online algorithm converges to the same
performance of the offline strategy (that knows the objective
functions).  We thus compare our protocol against all strategies that
can select a fixed set of $\K$ sensors for use in all of the rounds;
the best such strategy obtains reward $\max_{S \subseteq \sensors: |S|
\le \K} \sum_{t=1}^T \obj_t(S)$.  The difference between this quantity
and what our protocol obtains is known as its \emph{regret}, and an
algorithm is said to be \emph{no-regret} if its average regret tends to zero (or less)\footnote{Formally, if $R_T$ is the
  total regret for the first $T$ rounds, no-regret means 
$\limsup_{T \to \infty} R_T/T \le 0$.} as $T \to \infty$.

When $\K=1$, our problem is
simply the well-studied \emph{multiarmed bandit} (MAB) problem, for
which many no-regret algorithms are known \cite{foster99}.  For
general $\K$, because the average of several submodular functions
remains submodular, we can apply the result of
Nemhauser~\etal~\cite{nemhauser78} (\cf, \secref{sec:offline-problem})
to prove that a simple greedy algorithm obtains a $(1-1/e)$
approximation to the optimal offline solution.
Feige~\cite{feige97} showed that this is optimal in the sense that
obtaining a \\$(1-1/e+\epsilon)$ approximation for any $\epsilon > 0$ is
\NP-hard.  These facts suggest that we cannot expect any efficient
online algorithm to converge to a solution better than\\
$(1-1/e)\max_{S \subseteq \sensors: |S| \le \K} \sum_{t=1}^T
\obj_t(S)$.  We therefore define the $(1-1/e)$-regret of a sequence of
(possibly random) sets $\set{S_t}_{t=1}^T$ as
\vspace{-0.5em}
\[
\regret_{T} := \paren{1-1/e} \cdot \max_{S \subseteq \sensors: |S| \le \K}
\sum_{t=1}^T \obj_t(S) \ - \ \sum_{t=1}^T \E{\obj_t(S_t)} 
\]
where the expectation is taken over the distribution for each $S_t$.
We say an online algorithm producing a sequence of sets has 
\emph{no-$(1-1/e)$-regret} if $\limsup_{T \to \infty}
\frac{\regret_{T}}{T} \le 0$. 

\ignore{
\begin{itemize}
\item Define sensor selection problem as online optimization problem
\item Objective functions are submodular (examples; mention Nemhauser result)
\item Define $(1-1/e)$-regret
\end{itemize}
} %

\section{Centralized Algorithm for Online Sensor Selection}\label{sec:centralized}
Before developing the distributed algorithm for online sensor
selection, we will first review a centralized algorithm which is
guaranteed to achieve no $(1-1/e)$-regret. In
Sec. \ref{sec:distributed} we will show how this centralized algorithm
can be implemented efficiently in a distributed manner. This 
algorithm starts with the greedy algorithm for a known submodular
function mentioned in~\secref{sec:offline-problem}, and adapts it to the
online setting. 
Doing so requires an online algorithm for selecting a
\emph{single} sensor as a subroutine, and we review such an algorithm 
in~\secref{sec:centralized-sampling} before discussing the centralized
algorithm for selecting multiple sensors in~\secref{sec:centralized-multi}.

\ignore{
As we will
show, the key problem will be to develop an online algorithm for selecting a
\emph{single} sensor, which we consider in
\secref{sec:centralized-sampling}. 

In \secref{sec:centralized-multi}
we present our algorithm for selecting multiple sensors,
which relies on the single sensor case as a subroutine.
} %

\subsection{Centralized Online  Single Sensor Selection} \label{sec:centralized-sampling}
Let us first consider the case where $k=1$, i.e., we would like to
select one sensor at each round.  This simpler problem can be
interpreted as an instance of the multiarmed bandit problem (as
introduced in \secref{sec:online-problem}), where we have one arm for
each possible sensor.  In this case, the \EXPthree algorithm
\cite{auer02} is a centralized solution for no-regret single sensor
selection.
\EXPthree works as follows: It
is parameterized by a \emph{learning rate} $\eta$, and an
\emph{exploration probability} $\gamma$. It maintains a set of weights
$w_{s}$, one for each arm (sensor) $s$, initialized to
1. %
At every round $t$,
it will select each arm $s$ with probability
$$p_{s}=(1-\gamma)\frac{w_{s}}{\sum_{s'}
  w_{s'}}+\frac{\gamma}{\numsensors},$$ i.e., with probability $\gamma$
it explores, picking an arm uniformly at random, and with probability
$(1-\gamma)$ it exploits, picking an arm $s$ with probability
proportional to its weight $w_{s}$.  Once an arm $s$ has been
selected, a feedback $r=\obj_{t}(\{s\})$ is obtained, and the weight $w_{s}$ is
updated to
$$w_{s}\leftarrow w_{s}\exp(\eta r/p_{s}).$$ 
Auer et al.~\cite{auer02} showed that
with appropriately chosen learning rate $\eta$ and exploration
probability $\gamma$ it holds that the cumulative regret $R_{T}$ of
\EXPthree is $\cO(\sqrt{T \numsensors \ln \numsensors })$, i.e., the
average regret $R_{T}/T$ converges to zero.

\subsection{Centralized Selection of Multiple Sensors}\label{sec:centralized-multi}
In principle, we could interpret the sensor selection problem as a
$\binom{n}{k}$-armed bandit problem, and apply existing no-regret
algorithms such as \EXPthree. Unfortunately, this approach does not
scale, since the number of arms grows exponentially with $k$.
However, in contrast to the traditional multiarmed bandit problem,
where the arms are assumed to have independent payoffs, in the sensor
selection case, the utility function is submodular and thus the
payoffs are correlated across different sets.  Recently, Streeter and
Golovin showed how this submodularity can be exploited, and developed
a no-$(1-1/e)$-regret algorithm for online maximization of submodular
functions \cite{streeter08}.
The key idea behind their algorithm, $\text{OG}_{\text{unit}}$, is to
turn the offline greedy algorithm into an online algorithm by
replacing the greedy selection of the element $s_{k}$ that maximizes
the benefit $s_{k}=\argmax_{s} \obj(\{s_{1},...,s_{k-1}\}\cup\{s\})$
by a bandit algorithm. As shown in the pseudocode below, \OGunit
maintains $k$ bandit algorithms, one for each sensor to be selected.
At each round $t$, it selects $k$ sensors according to the choices of
the $k$ bandit algorithms $\Alg_{i}$ \footnote{Bandits with duplicate
choices are handled in Sec. 4.6.1 of \cite{streeter08}}.  Once the
elements have been selected, the $\ith$ bandit algorithm $\Alg_{i}$
receives as feedback the incremental benefit
$\obj_{t}(s_{1},\dots,s_{i})-\obj_{t}(s_{1},\dots,s_{i-1})$, i.e., how
much additional utility is obtained by adding sensor $s_{i}$ to the
set of already selected sensors.  Below we define $[m] := \set{1, 2,
  \ldots, m}$.

\begin{center}
\begin{tabular}{|p{8cm}|}
\hline
\textbf{Algorithm \OGunit from~\protect{\cite{streeter08}}:}\\
\hspace{0cm} Initialize $\K$ multiarmed bandit algorithms  $\Alg_1, \Alg_2, \ldots, \Alg_\K$,\\ 
\hspace{0cm} each with action set $\sensors$.\\
\hspace{0cm} For each round $t \in [T]$\\
\hspace{0.3cm} For each stage $i \in [\K]$ in parallel\\
\hspace{0.6cm}      $\Alg_i$ selects an action $\sensor^t_i$\\
\hspace{0.3cm} For each $i \in [\K]$ in parallel\\
\hspace{0.6cm}      feedback $\obj_t(\set{\sensor^t_j : j
  \le i }) - \obj_t(\set{\sensor^t_j : j < i })$ to $\Alg_i$.\\
\hspace{0.3cm} Output $S_t = \set{a^t_1, a^t_2, \ldots, a^t_\K}$.\\
\hline
\end{tabular}
\end{center}

In~\cite{streeter07tr} it is shown that \OGunit has a
$\paren{1-\frac{1}{e}}$-regret bound of $\cO(\K R)$ in this feedback
model assuming each $\Alg_i$ has expected regret at most $R$. Thus, when using \EXPthree as a subroutine, \OGunit has no-$(1-1/e)$-regret.

Unfortunately, \EXPthree (and in fact all MAB algorithms with
no-regret guarantees for non-stochastic reward functions) require
sampling from some distribution with weights associated with the
sensors.
If $\numsensors$ is small, we could simply store these weights on the
server, and run the bandit algorithms $\Alg_i$ there.  However, this
solution does not scale to large numbers of sensors.  
Thus the key
problem for online sensor selection is to develop a multiarmed bandit
algorithm which implements \emph{distributed sampling} across the
network, with minimal overhead of communication.  In addition, the
algorithm needs to be able to maintain the distributions (the weights)
associated with each $\Alg_{i}$ in a distributed fashion.
\ignore{  
In the following, we will develop \algoname, an efficient distributed
algorithm that address these key challenges.
} %

\section{Distributed Algorithm for \\Online Sensor Selection}\label{sec:distributed}

We will now develop \algoname, an efficient algorithm for distributed
online sensor selection.  For now we make the following assumptions:

\begin{enumerate}\denselist
\item Each sensor $\sensor \in \sensors$ is able to compute its
contribution to the utility $\obj_{t}(S\cup \{\sensor\})-\obj_{t}(S)$,
where $S$ are a subset of sensors that have already been selected.
\item Each sensor can broadcast  to all other sensors.
\item The sensors have calibrated clocks and unique, linearly ordered identifiers.
\end{enumerate}

These assumptions are reasonable in many applications: (1) In target
detection, for example, the objective function $\obj_{t}(S)$ counts
the number of targets detected by the sensors $S$.  Once previously
selected sensors have broadcasted which targets they detected, the new
sensor $s$ can determine how many additional targets have been
detected.  Similarly, in statistical estimation, one sensor (or a
small number of sensors) randomly activates each round and broadcasts
its value. After sensors $S$ have been selected and announced their
measurements, the new sensor $s$ can then compute the improvement in
prediction accuracy over the previously collected data. (2) The
assumption that broadcasts are possible may be realistic for dense
deployments and fairly long range transmissions.  In
\secref{sec:lazy-renorm} we will show how assumptions (1) and (2) can
be relaxed.

As we have seen in \secref{sec:centralized}, the key insight in
developing a centralized algorithm for online selection is to replace
the greedy selection of the sensor which maximally improves the total
utility over the set of previously selected sensors by a bandit
algorithm.  Thus, a natural approach for developing a distributed
algorithm for sensor selection is to first consider the single sensor
case.

\subsection{Distributed Selection of a Single Sensor}\label{sec:sampling}

\ignore{ If we wish to implement a distributed variant of a bandit
algorithm such as \EXPthree, there are two questions we must answer,
namely, how do we sample arms and how do we update the distribution on
arms?  We will now develop a distributed variant of \EXPthree that
solves these challenges.  } %

\commentout{
\andreas{need to move:
\begin{theorem} \label{thm:distributed-single-sensor-selection}
For every $\alpha \in [1, \infty)$ there is a protocol for the
  multiarmed bandit problem such that the worst case expected regret
  is
$\cO(\sqrt{\OPT \numsensors \log \numsensors} + \OPT e^{-\alpha})$, where
  $\OPT$ is the total reward of the best action, and in each round at most 
$\alpha\paren{1 + (e-1)/\alpha}$ sensors communicate with the server and at most 
$2\alpha\paren{1 + (e-1)/\alpha}$ messages are exchanged in
expectation.
Moreover, there is a protocol with worst case expected regret
$\cO(\sqrt{\OPT \numsensors \log \numsensors})$ and in each round 
only $\cO(\log \numsensors)$ sensors  communicate with the server and 
$\cO(\log \numsensors)$ messages are exchanged with high probability.
\end{theorem}
}
} %

\ignore{
The key challenge in developing a distributed version of \EXPthree
when sensors can broadcast to all other sensors is to find a way to
sample exactly one element from a probability distribution $p$ over
sensors in a distributed manner.  We measure the cost of the sampling
procedure in terms of the number of broadcast messages.
}

\looseness -1 The key challenge in developing a distributed version of \EXPthree is
to find a way to sample exactly one element from a probability
distribution $p$ over sensors in a distributed manner. This problem is
distinct from randomized leader election \cite{Nakano02}, where the
objective is to select exactly one element but the element need not be
drawn from a specified distribution.  We note that under the multi-hop
communication model, sampling one element from the uniform
distribution given a rooted spanning tree can be done via a simple
random walk~\cite{Kuhn08}, but that under the broadcast and star network models
this approach degenerates to centralized sampling.  
Our algorithm, in contrast, samples from an arbitrary
distribution by allowing sensors to individually decide to activate.
Our bottom-up approach also has two other advantages:
($1$) it is amenable to modification of the activation probabilities based on
local observations, as we discuss in~\secref{sec:measure-dependent-sampling}, and ($2$)
since it does not rely on any global state of the network such as a
spanning tree, it can gracefully cope with significant edge or node
failures.

\ignore{
Our proposed algorithm allows sensors to individually decide to
activate, with the result that the selected sensor is drawn from the
desired distribution.

Our proposed is suited for the broadcast and star network models.

Rooted spanning tree algorithms reduce to centralized sampling in
these models.

Kuhn has proposed a rooted spanning tree algorithm to select an
element of a network uniformly at random.

Kuhn's method assumes a multi-hop network, with rooted spanning tree.

We measure the cost of the sampling procedure in terms of the number
of broadcast messages.
}

\paragraph{A naive distributed sampling scheme}
A naive distributed algorithm would be to let each sensor keep track
of all activation probabilities $p$.  Then, one sensor (e.g., with the
lowest identifier) would broadcast a single random number $u$
uniformly distributed in $[0,1]$, and the sensor $\sensor$ for which
$\sum_{i=1}^{\sensor-1} p_{i} \le u < \sum_{i=1}^{\sensor} p_{i}$
would activate.  However, for large sensor network deployments, this
algorithm would require each sensor to store a large amount of global
information (all activation probabilities $p$).  Instead, each sensor
$\sensor$ could store only their own probability mass $p_{\sensor}$;
the sensors would then, in order of their identifiers, broadcast their
probabilities $p_{\sensor}$, and stop once the sum of the
probabilities exceeds $u$.  This approach only requires a constant
amount of local information, but requires an impractical $\Theta(n)$
messages to be sent, and sent sequentially over $\Theta(n)$ time
steps.

\paragraph{Distributed multinomial sampling}
In this section we present a protocol that requires only $\cO(1)$
messages in expectation, and only a constant amount of local information.

For a sampling procedure with input distribution $p$, we let $\psamp$
denote the resulting distribution, where in all cases at most one
sensor is selected, and nothing is selected with probability
$1-\sum_{\sensor} \psamp_{\sensor}$.  A simple approach towards
distributed sampling would be to activate each sensor
$\sensor\in\sensors$ \emph{independently} from each other with
probability $p_{\sensor}$. While in expectation, exactly one sensor is
activated, with probability $\prod_{\sensor} (1-p_{\sensor})>0$ no
sensor is activated; also since sensors are activated independently,
there is a nonzero probability that more than one sensor is activated.
Using a synchronized clock, the sensors could determine if no sensor
is activated.  In this case, they could simply repeat the selection
procedure until at least one sensor is activated.  
One naive approach would be to repeat the selection
procedure until exactly one sensor is activated.  
However with two sensors and $p_1 = \eps, p_2 = 1 - \eps$ this
algorithm yields $\psamp_1 = \eps^2 / (1 - 2\eps + 2 \eps^2) =
\cO(\eps^2)$, so the first sensor is severely underrepresented.
Another simple protocol would be to select exactly one sensor uniformly at
random from the set of activated sensors, which can be implemented using few messages.

\begin{center}
\begin{tabular}{|p{7.6cm}|}
\hline
\textbf{The Simple Protocol:}\\
\hspace{0cm} For each sensor $\sensor$ in parallel\\
\hspace{0.5cm} Sample $X_{\sensor} \sim \Bernoulli{p_{\sensor}}$.\\
\hspace{0.5cm} If $(X_{\sensor} = 1)$, $X_{\sensor}$ activates.\\ 
\hspace{0cm} All active sensors $S$ coordinate to select a single
sensor uniformly at random from $S$, e.g., by electing the minimum ID
sensor in $S$ to do the sampling.\\
\hline
\end{tabular}
\end{center}

\noindent
It is not hard to show that with this protocol, for all sensors $\sensor$, 
\[
\psamp_{\sensor} = p_{\sensor}\cdot \Econd{\frac{1}{|S|}}{\sensor \in S} \ge p_{\sensor}/ \Econd{|S|}{\sensor \in S} \ge
p_{\sensor}/2 
\] 
by appealing to Jensen's inequality.  Since $\psamp_{\sensor} \le
p_{\sensor}$, we find that this simple protocol maintains a ratio
$r_{\sensor} := \psamp_{\sensor} / p_{\sensor} \in [\frac12 , 1]$.
Unfortunately, this analysis is tight, as can be seen from the example
with two sensors and $p_1 = \eps, p_2 = 1 - \eps$.

To improve upon the simple protocol, first consider running it on an
example with $p_1 = p_2 = \cdots = p_n = 1/n$.  Since the protocol
behaves exactly the same under permutations of sensor labels, by
symmetry we have $\psamp_{1} = \psamp_{2} = \cdots = \psamp_{n}$, and
thus $r_i = r_j$ for all $i,j$.  Now consider an input distribution
$p$ where there exists integers $N$ and $k_1, k_2, \ldots, k_n$ such
that $p_{\sensor} = k_{\sensor}/N$ for all $\sensor$.  Replace each
$\sensor$ with $k_{\sensor}$ fictitious sensors, each with probability
mass $1/N$, and each with a label indicating $\sensor$.  Run the
simple protocol with the fictitious sensors, selecting a fictitious
sensor $\sensor'$, and then actually select the sensor indicated by
the label of $\sensor'$.  By symmetry this process selects each
fictitious sensor with probability $(1-\probnull)/N$, where
$\probnull$ is the probability that nothing at all is selected, and
thus the process selects sensor $\sensor$ with probability
$k_{\sensor}(1-\probnull)/N = (1-\probnull)p_{\sensor}$ (since at most
one fictitious sensor is ever selected).

We may thus consider the following improved protocol which
incorporates the above idea, simulating this modification to the
protocol exactly when $p_{\sensor} = k_{\sensor}/N$ for all $\sensor$.

\begin{center}
\begin{tabular}{|p{7.6cm}|}
\hline
\textbf{The Improved Protocol($N$):}\\
\hspace{0cm} For each sensor $\sensor$ in parallel\\
\hspace{0.5cm} Sample $X_{\sensor} \sim \Binomial{\ceil{N \cdot
    p_{\sensor}}, 1/N}$.\\
\hspace{0.5cm} If $(X_{\sensor} \ge 1)$, then activate sensor $\sensor$.\\ 
\hspace{0cm} From the active sensors $S$, select sensor $\sensor$ with
probability $X_{\sensor}/\sum_{\sensor'\in S} X_{\sensor'}$.\\
\hline
\end{tabular}
\end{center}

This protocol ensures the ratios $r_{\sensor} := \psamp_{\sensor} /
p_{\sensor}$ are the same for all sensors, provided each $p_{\sensor}$
is a multiple of $1/N$.  Assuming the probabilities are rational,
there will be a sufficiently large $N$ to satisfy this condition.
To reduce $\probnull := \prob{S = \emptyset}$ in the simple protocol, we
may sample each $X_{\sensor}$ from $\Bernoulli{\alpha \cdot p_{\sensor}}$ for any
$\alpha \in [1, n]$.  The symmetry argument remains unchanged.
This in turn suggests sampling $X_{\sensor}$ from 
$\Binomial{\ceil{N \cdot p_{\sensor}}, \alpha/N}$ in the improved
protocol.  Taking the limit as $N \to \infty$, the binomial
distribution becomes Poisson, and we obtain the desired protocol.

\begin{center}
\begin{tabular}{|p{8.2cm}|}
\hline
\textbf{The Poisson Multinomial Sampling (PMS)
  Protocol($\alpha$):}\\
\hspace{0cm} Same as the improved protocol, except each \\ 
\hspace{0cm} sensor $\sensor$ samples $X_{\sensor} \sim \Poisson{\alpha p_{\sensor} }$\\
\ignore{
\hspace{0cm} \emph{Sensors:} \\
\hspace{0cm} For each sensor $\sensor$ in parallel\\
\hspace{0.5cm} Sample $X_{\sensor} \sim \Poisson{\alpha p_{\sensor} }$.\\
\hspace{0.5cm} If $(X_{\sensor} \ge 1)$, send $X_{\sensor}$ to the server.\\ 
\hspace{0cm} \emph{Server:}\\
\hspace{0cm} Receive messages from a set $S$ of sensors. \\
\hspace{0cm} If $S = \emptyset$, select nothing.\\
\hspace{0cm} Else, select $\sensor$ with probability
$X_{\sensor}/\sum_{\sensor' \in S} X_{\sensor'}$.\\
}
\hline
\end{tabular}
\end{center}
\noindent
Straight-forward calculation shows that 
\[
\prob{S = \emptyset}  = \prod_{\sensor} \exp\set{ - \alpha \cdot
  p_{\sensor}}  = \exp \bigl\{ - \sum_{\sensor} \alpha \cdot p_{\sensor}\bigr\}  = e^{-\alpha}  
\]
\ignore{
\begin{align*}
\prob{S = \emptyset}  
&= \prod_{\sensor} \exp\set{ - \alpha \cdot p_{\sensor}} \\
& = \exp \bigl\{ - \sum_{\sensor} \alpha \cdot p_{\sensor}\bigr\}  = \exp \set{-\alpha}   
\end{align*}
} %
Let $C$ be the number of messages.  Then 
\[ \E{C} =  \sum_{\sensor} \prob{X_{\sensor} \ge 1} =  \sum_{\sensor}
(1-e^{-\alpha p_{\sensor}}) \le \sum_{\sensor} \alpha p_{\sensor}   =
\alpha \]
\ignore{
\begin{align*}
\E{C} & =  \sum_{\sensor} \prob{X_{\sensor} \ge 1} \\
 & = \sum_{\sensor} (1-\exp\set{-\alpha p_{\sensor}})\\
 & \le  \sum_{\sensor} \alpha p_{\sensor} = \alpha\mbox{.}
\end{align*}
Here we have used linearity of expectation in the first line, and $1+x
\le e^x$ for all $x \in \Real$ in the third.
} %
Here we have used linearity of expectation, and $1+x
\le e^x$ for all $x \in \Real$.
In summary, we have the following result about our protocol:
\begin{prop} \label{thm:sampling}
Fix any fixed  $p$ and $\alpha>0$.  The \LimitProtocolShort always selects at most
one sensor, ensures   
$$\forall \sensor:\ \prob{\sensor \text{ selected}} = (1 - e^{-\alpha})p_{\sensor}$$
and requires no more than $\alpha$ messages in expectation. 
\end{prop}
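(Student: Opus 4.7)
The claim that at most one sensor is ever selected is immediate from the protocol: the final selection step picks exactly one sensor from the random active set $S$, or nothing if $S = \emptyset$. The expected message count bound has already been derived in the text ($\E{C} \le \alpha$ via $1+x \le e^x$). So the crux is to establish the identity $\prob{\sensor \text{ selected}} = (1-e^{-\alpha})p_{\sensor}$.

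My plan is to exploit the classical Poissonization identity rather than take a limit of the improved protocol as the text suggests. Since the $X_{\sensor} \sim \Poisson{\alpha p_{\sensor}}$ are independent, their sum $N := \sum_{\sensor} X_{\sensor}$ is $\Poisson{\alpha \sum_{\sensor} p_{\sensor}} = \Poisson{\alpha}$, which immediately yields $\prob{S = \emptyset} = \prob{N = 0} = e^{-\alpha}$. The key fact I will invoke is that conditional on $N = n$, the vector $(X_1, \ldots, X_{\numsensors})$ is multinomial with parameters $n$ and $p$. This is a standard result obtained by dividing the joint Poisson pmf by the marginal of $N$.

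From here the computation is short. Conditioned on $N = n \ge 1$, sensor $\sensor$ is selected with probability $X_{\sensor}/N$, and by the multinomial conditional the expectation satisfies $\Econd{X_{\sensor}/n}{N = n} = p_{\sensor}$. Therefore
\[
\prob{\sensor \text{ selected}} \;=\; \sum_{n \ge 1} \prob{N=n} \cdot \Econd{\tfrac{X_{\sensor}}{n}}{N=n} \;=\; p_{\sensor} \cdot \prob{N \ge 1} \;=\; (1-e^{-\alpha})\,p_{\sensor},
\]
as required.

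The main obstacle is really just making sure the Poissonization step is justified cleanly, since the rest of the proof is essentially symmetry plus arithmetic. One alternative, more self-contained route is to mimic the text's ``fictitious sensors'' argument: for rational $p_{\sensor}$ of the form $k_{\sensor}/N$ run the Binomial-based improved protocol with parameter $\alpha/N$, invoke a symmetry/exchangeability argument over the $\sum_{\sensor} k_{\sensor}$ fictitious sensors (all having identical per-sensor selection probability), then take $N \to \infty$ so that each Binomial converges to the Poisson specified by the PMS Protocol, and extend from rationals to arbitrary probabilities by continuity. This yields the same identity but with more bookkeeping; I would only fall back on it if we wanted to avoid citing the Poissonization fact.
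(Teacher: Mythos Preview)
Your proof is correct and takes a genuinely different route from the paper. The paper establishes the selection identity by a symmetry/fictitious-sensor argument for the Binomial-based Improved Protocol on rational $p_{\sensor} = k_{\sensor}/N$ (each of the $\sum_{\sensor} k_{\sensor}$ fictitious sensors is selected with equal probability, hence $\sensor$ is selected with probability $(1-\probnull)p_{\sensor}$), and then takes $N \to \infty$ so that the binomials converge to the specified Poissons. You instead invoke the Poissonization identity directly: $N = \sum_{\sensor} X_{\sensor} \sim \Poisson{\alpha}$ and, conditional on $N = n$, the vector $(X_{\sensor})$ is $\mathrm{Multinomial}(n,p)$, so $\Econd{X_{\sensor}/n}{N=n} = p_{\sensor}$ and the result follows immediately. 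Your argument is cleaner and sidesteps both the rational-approximation step and the limiting argument; the paper's route is more self-contained (it does not assume the reader knows the Poisson--multinomial conditional law) and has the virtue of explaining \emph{why} the protocol was designed this way. Your fallback sketch is essentially the paper's own argument.
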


In order to ensure that exactly one sensor is selected, whenever $S =
\emptyset$ we can simply rerun the protocol with fresh random seeds as
many times as needed until $S$ is non-empty.  Using $\alpha = 1$, 
this modification will require only $\cO(1)$ messages in expectation and
at most $\cO(\log n)$ messages with high probability in the broadcast model.
We can combine this protocol with \EXPthree to get the following
result.

\begin{theorem} \label{thm:distributed-single-sensor-selection}
In the broadcast model, running \EXPthree using the \LimitProtocolShort with
$\alpha = 1$, and rerunning the protocol whenever nothing is selected,
yields exactly the same regret bound as standard \EXPthree, and
 in each round at most $e/(e-1)+2 \approx 3.582$ messages
are broadcast in expectation.
\end{theorem}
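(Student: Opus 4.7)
The plan is to decompose the theorem into two independent claims: first, that the regret bound of standard \EXPthree is preserved; second, that the expected number of broadcasts per round is at most $e/(e-1)+2$.

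For the regret bound, the key observation is that rerunning the protocol until success exactly simulates sampling from the \EXPthree distribution $p$. By \propref{thm:sampling}, a single invocation of the \LimitProtocolShort at $\alpha = 1$ selects sensor $v$ with probability $(1-e^{-1})p_v$ and selects nothing with probability $e^{-1}$. Since successive reruns are independent (fresh random seeds), a geometric-series argument shows that, conditional on eventually selecting something (which happens with probability one), sensor $v$ is selected with probability exactly $p_v$. Therefore the sequence of selected sensors, observed rewards, and weight updates is equal in distribution to that of standard \EXPthree, and the standard \EXPthree regret bound applies verbatim.

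For the message bound, I would count expected broadcasts per round. Let $N$ be the number of runs until a sensor is successfully selected; $N$ is geometric with success probability $1 - e^{-1}$, so $\E{N} = e/(e-1)$. By \propref{thm:sampling}, the expected number of active-sensor broadcasts in a single run is at most $\alpha = 1$. Because consecutive runs use independent random seeds, Wald's identity gives expected total sampling broadcasts across the round of $\E{N} \cdot 1 = e/(e-1)$. Two additional per-round broadcasts complete the count: one to announce the identity of the chosen sensor (performed in the successful run by the coordinator elected among the active sensors) and one to broadcast the reward needed for \EXPthree's multiplicative weight update. Summing yields $e/(e-1) + 2$.

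The main technical subtlety is the application of Wald's identity. I need to verify that the event $\{N \ge i\}$ is measurable with respect to the outcomes of runs $1, \ldots, i-1$ alone, and hence independent of the messages produced in run $i$; this is immediate from the fresh-seed assumption across runs. Aside from this check, the argument is essentially bookkeeping built on top of \propref{thm:sampling} and the standard \EXPthree analysis.
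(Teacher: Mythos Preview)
Your proposal is correct and follows essentially the same approach as the paper: the paper likewise observes that rerunning until success samples exactly from $p$ (so the \EXPthree regret bound carries over unchanged), and it bounds the expected number of activations via the equivalent computation $\E{X \mid X \ge 1} = \E{X}/\prob{X \ge 1} \le \alpha/(1-e^{-\alpha})$ (packaged as a separate proposition) rather than invoking Wald, then adds the same two broadcasts for the selection announcement and the reward feedback.
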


The regret bound for \EXPthree is $\cO(\sqrt{\OPT \numsensors \log \numsensors})$, where
$\OPT$ is the total reward of the best action.  
Our variant simulates \EXPthree, and thus has identical regret.
Proofs of our theoretical results can be found in\ifthenelse{\boolean{istechrpt}}{
\noindent
the
Appendix. }{ %
the full version of this paper
\cite{golovin09distributed}.  }

\paragraph{Remark} Running our variant of \EXPthree requires that
each sensor know the number of sensors, $n$, in order to compute its
activation probability.  If each sensor $\sensor$ has only a reasonable
estimate of $n_{\sensor}$ of $n$, however, our algorithm still
performs well.  
For example, it is possible to prove that if all of the sensors have
the same estimate $n_{\sensor} = cn$ for some constant $c > 0$, 
then the upper bound on expected 
regret, $R(c)$, grows as $R(c) \approx  R(1)\cdot \max \set{c,
  1/c}$.
The expected number of activations in this
case increases by at most $\paren{\frac{1}{c} - 1}\gamma$.
In general underestimating $n$ leads to more activations, and
underestimating or overestimating $n$ can lead to more regret.
This graceful degradation of performance with respect to the error
in estimating $n$ holds for all of our algorithms.

\subsection{The Distributed Online Greedy Algorithm}
We now use our single sensor selection algorithm to develop our main
algorithm, the Distributed Online Greedy algorithm (\algoname).  It is
based on the distributed implementation of \EXPthree using the
\LimitProtocolShort. 
Suppose we would like to select $k$ sensors at each round
$t$.  Each sensor $\sensor$ maintains $k$ weights
$w_{\sensor,1},\dots,w_{\sensor,k}$ and normalizing constants
$Z_{\sensor,1},\dots,Z_{\sensor,k}$. The algorithm proceeds in $k$
stages, synchronized using the common clock. In stage $i$, a single
sensor is selected using the \LimitProtocolShort applied to the
distribution $(1-\gamma)w_{\sensor,i}/Z_{\sensor,i} + \gamma/n$.
Suppose sensors $S=\{\sensor_{1},\dots,\sensor_{i-1}\}$ have been
selected in stages $1$ through $i-1$.  The sensor $\sensor$ selected
at stage $i$ then computes its local rewards $\payoff_{\sensor,i}$
using the utility function
$\obj_{t}(S\cup\{\sensor_{i}\})-\obj_{t}(S)$. It then computes its new
weight
$$w'_{\sensor,i}=w_{\sensor,i}\exp(\eta \payoff_{\sensor,i} /
p_{\sensor,i}),$$ and broadcasts the difference between its new and
old weights $\Delta_{\sensor,i}=w'_{\sensor,i}-w_{\sensor,i}$.  All
sensors then update their $\ith$ normalizers using
$Z_{\sensor,i}\leftarrow Z_{\sensor,i}+\Delta_{\sensor,i}$.
\vfigref{fig:dog-code} presents the pseudo-code of the \algoname
algorithm.  Thus given Theorem~$12$ of~\cite{streeter07tr} we have the
following result about the \algoname algorithm:
\begin{theorem}\label{thm:dog-performance}
The \algoname algorithm selects, at each round $t$ a set
$S_{t}\subseteq\sensors$ of $k$ sensors such that
\[
\frac{1}{T}\mathbb{E}\left[\sum_{t=1}^{T}f_{t}(S_{t})\right]\geq
\frac{1-\frac{1}{e}}{T}\max_{|S|\leq k} \sum_{t=1}^{T} f_{t}(S) -
O\left(k\sqrt{\frac{n\log n}{T}}\right).
\]
In expectation, only $\cO(k)$ messages are exchanged each round. 
\end{theorem}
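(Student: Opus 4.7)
The plan is to view \algoname as a faithful distributed implementation of \OGunit in which each of the $k$ internal bandit algorithms $\Alg_i$ is the distributed variant of \EXPthree from \thmref{thm:distributed-single-sensor-selection}, and then to invoke Theorem~12 of \cite{streeter07tr} as a black box. Concretely, I would first argue correctness: at stage $i$ of round $t$, the sampling step draws sensor $\sensor_i$ from exactly the \EXPthree distribution $(1-\gamma)w_{\sensor,i}/Z_{\sensor,i}+\gamma/n$, because the $\Delta_{\sensor,i}$ broadcasts after each stage keep every sensor's local copy of $Z_{\cdot,i}$ in sync with the true sum of weights at stage $i$. The feedback handed to stage $i$ is the marginal gain $\obj_t(\{\sensor_1,\dots,\sensor_i\})-\obj_t(\{\sensor_1,\dots,\sensor_{i-1}\})$, which lies in $[0,1]$ by monotonicity and the $[0,1]$-boundedness assumption on $\obj_t$, and therefore constitutes a valid reward signal for \EXPthree.

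Once this simulation is established, the regret analysis is immediate. By \thmref{thm:distributed-single-sensor-selection}, each distributed \EXPthree instance has the same expected regret bound as centralized \EXPthree, namely $R = \cO(\sqrt{T n\log n})$. Theorem~12 of \cite{streeter07tr} then states that \OGunit driven by $k$ subroutines of expected regret at most $R$ has $(1-1/e)$-regret at most $\cO(k R)$. Substituting and dividing through by $T$ yields the claimed per-round bound $\cO(k\sqrt{n\log n / T})$.

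For the communication count, I would decompose the messages used per round into two pieces per stage: (a) the \LimitProtocolShort sampling step, which by \thmref{thm:distributed-single-sensor-selection} uses at most $e/(e-1)+2$ broadcasts in expectation (including reruns when $S=\emptyset$), and (b) the single broadcast of $\Delta_{\sensor,i}$ by the selected sensor so that all sensors can update $Z_{\cdot,i}$. Each stage therefore contributes $\cO(1)$ expected messages, and summing over the $k$ stages via linearity of expectation gives the $\cO(k)$ bound per round.

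The main obstacle in this plan is verifying that the distributed protocol genuinely reproduces the joint distribution over reward trajectories that \OGunit would produce centrally. This reduces to checking two things: that the normalizers $Z_{\sensor,i}$ are never stale at the moment stage $i$ draws its sample (ensured by the synchronous broadcast of $\Delta_{\sensor,i}$ before stage $i+1$ begins), and that the $k$ stages interact only through the sequence $\sensor_1,\dots,\sensor_k$ of chosen arms — exactly as in centralized \OGunit — so that the hypotheses of Theorem~12 of \cite{streeter07tr} are met. Everything else is bookkeeping on top of the cited results.
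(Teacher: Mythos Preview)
Your proposal is correct and matches the paper's own argument essentially line for line: the paper also derives the regret bound by observing that \algoname faithfully simulates \OGunit with distributed \EXPthree subroutines (each inheriting the standard \EXPthree regret via \thmref{thm:distributed-single-sensor-selection}) and then invoking Theorem~12 of~\cite{streeter07tr}, while the $\cO(k)$ message bound comes from summing the $\cO(1)$-expected-message cost of each of the $k$ stages. Your added remarks on why the normalizers are never stale and why the feedback lies in $[0,1]$ are exactly the sanity checks the paper leaves implicit.
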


\begin {algorithm*}%
\Titleofalgo { Distributed Online Greedy (\dog) \ \ (described in the broadcast model) }
\KwIn { $\K \in \nats$, a set $\sensors$, and $\alpha, \gamma, \eta
  \in \reals_{>0}$.  Reasonable defaults are any $\alpha \in [1, \ln
    |\sensors|]$, and $\gamma = \eta$ $=$ $\min\paren{1, \paren{|\sensors|
      \ln |\sensors|/g}^{1/2}}$, where $g$ is a guess for the maximum cumulative
reward of any single sensor~\cite{auer02}. }
Initialize $w_{\sensor, i} \gets 1$ and $Z_{\sensor,i} \gets |\sensors|$ for all
$\sensor \in \sensors$, $i \in [\K]$.
Let  $\rho(x, y) := (1-\gamma)\frac{x}{y} +
\frac{\gamma}{|\sensors|}$.\\
\For{ \KwEach \emph{round} $t = 1, 2, 3, \ldots $} {
  Initialize $S_{\sensor,t} \gets \emptyset$ for each $\sensor$ in parallel.\\
  \For{ \KwEach \emph{stage} $i \in [\K]$} {
    \For{ \KwEach \emph{sensor} $\sensor \in \sensors$ \emph{in parallel}} { 
      \Repeat {\emph{$\sensor$ receives a message of type }
        $\tuple{\emph{select } \id}$ } {
         Sample $X_{\sensor} \sim \Poisson{\alpha \cdot \rho(w_{\sensor,i}, Z_{\sensor,i})} $.\\
         \If{$(X_{\sensor} \ge 1)$} { 
           Broadcast $\tuple{\text{sampled } X_{\sensor}, \id(\sensor)}$; Receive messages from sensors $S$. (Include $\sensor \in S$
           for convenience).\\
           \If{$\id(\sensor) = \min_{\sensor' \in S} \id(\sensor')$}{
             Select exactly one element $v_{it}$ from $S$
             such that each $\sensor'$ is selected with probability
             $X_{\sensor'}/\sum_{u \in S} X_{u}$.\\              
             Broadcast $\tuple{\text{select } \id(\sensor_{it})}$.\\
           }
           Receive message $\tuple{\text{select } \id(\sensor_{it})}$.\\
           \If{$\id(\sensor) = \id(\sensor_{it})$}{
             Observe $\obj_t(S_{\sensor,t} + \sensor)$; $\payoff \gets
             \obj_t(S_{\sensor,t} + \sensor) -  \obj_t(S_{\sensor,t})$; $\Delta_{\sensor} \gets  w_{\sensor,i} (\exp\set{\eta \cdot
               \payoff/ \rho(w_{\sensor,i}, Z_{\sensor,i})} - 1)$;
             $Z_{\sensor,i} \gets Z_{\sensor,i} + \Delta_{\sensor}$;
             $w_{\sensor} \gets w_{\sensor} + \Delta_{\sensor}$;
             Broadcast $\tuple{\text{weight update }\Delta_{\sensor}, \id(\sensor)}$.\\
           }
         }
         \lIf{\emph{receive message $\tuple{\text{weight update }\Delta, \id(\sensor_{it})}$}}{
           $S_{\sensor,t} \gets S_{\sensor,t} \cup \set{\sensor_{it}}$;
           $Z_{\sensor,i} \gets Z_{\sensor,i} + \Delta $;\\
         }
        } %
    }
  }
}
\vspace{2mm}
 \KwOut{At the end of each round $t$ each sensor has an identical local copy
  $S_{\sensor,t}$ of the selected set $S_{t}$.}
\caption{The Distributed Online Greedy Algorithm}
\label{fig:dog-code}
\vspace{-5mm}
\end {algorithm*}

\section{The Star Network Model}
\label{sec:lazy-renorm}

In some applications, the assumption that sensors can broadcast messages to all sensors may be unrealistic.  Furthermore, in some applications sensors may not be able to compute the marginal benefits $\obj_{t}(S\cup\{s\})-\obj_{t}(S)$ (since this calculation may be computationally complex).  In this section, we analyze \lazydog, a variant of our \dog algorithm, which replace the above assumptions by the assumption that there is a dedicated base station\footnote{Though the existence of such a base station means the protocol is not
completely distributed, it is realistic in sensor network applications
where the sensor data needs to be accumulated somewhere for analysis.}
 available which  computes utilities and which can send non-broadcast messages to individual sensors.

We make the following assumptions:
\begin{enumerate}\denselist
\item Every sensor stores its probability mass $p_\sensor$ with
it, and can only send messages to and receive messages from the base station.  
\item The base station is able, after receiving messages from a set $S$ of sensors, to compute the utility $\obj_{t}(S)$ and send this utility back to the active sensors.  
\end{enumerate}

These conditions arise, for example, when cell phones in
participatory sensor networks can contact the base station, but due to
privacy constraints cannot directly call other phones.  We do not
assume that the base station has access to all weights of the sensors
-- we will only require the base station to have $\cO(\K + \log n)$
memory.  In the fully distributed algorithm \dog that relies on
broadcasts, it is easy for the sensors to maintain their normalizers
$Z_{\sensor,i}$, since they receive information about rewards from all
selected sensors. The key challenge when removing the broadcast
assumption is to maintain the normalizers in an appropriate manner.

\vspace{-1mm}
\subsection{Lazy renormalization \& Distributed EXP3} \label{ssec:lazy}
\EXPthree (and all MAB with no-regret guarantees against arbitrary
reward functions) must maintain a distribution over actions, and
update this distribution in response to feedback about the
environment.
In \EXPthree, each sensor $\sensor$ requires only $w_{\sensor}(t)$ and
a normalizer $Z(t) :=\sum_{\sensor'}w_{\sensor'}(t)$ to compute
$p_{\sensor}(t)$\footnote{We let $x(t)$ denote the value of variable
$x$ at the start of round $t$, to ease analysis.  We do not actually
need to store the historical values of the variables over multiple time steps.}.  The
former changes only when $\sensor$ is selected. In the broadcast model
the latter can simply be broadcast at the end of each round. In the
star network model (or, more generally in multi-hop models), standard
flooding echo aggregation techniques could be used to compute and
distribute the new normalizer, though with high communication cost. We
show that a lazy renormalization scheme can significantly reduce the
amount of communication needed by a distributed bandit algorithm
\emph{without altering its regret bounds whatsoever}. Thus our lazy
scheme is complementary to standard aggregation techniques.
\ignore{ In the
broadcast model the latter can simply be broadcast at the end of each
round, however things are not so easy in the star network model.
Nevertheless, in this section we show that for \EXPthree (and many
algorithms like it), we can perform the necessary renormalization in a
lazy manner that keeps the amount of communication quite low.
}

Our lazy renormalization scheme for \EXPthree works as follows.  Each
sensor $\sensor$ maintains its weight $w_{\sensor}(t)$ and an estimate
$Z_{\sensor}(t)$ for $Z(t) := \sum_{\sensor'}w_{\sensor'}(t)$,
Initially, $w_{\sensor}(0) = 1$ and $Z_{\sensor}(0) = \numsensors$ for
all $\sensor$.  The central server stores $Z(t)$.
Let 
\[
\rho(x, y) := (1-\gamma)\frac{x}{y} + \frac{\gamma}{\numsensors}\mbox{.}
\]
Each sensor then proceeds to activate as in the sampling procedure 
of~\secref{sec:sampling} as if its probability mass in round $t$
were $q_{\sensor} = \rho(w_{\sensor}(t), Z_{\sensor}(t))$ instead of its true
value of $\rho(w_{\sensor}(t), Z(t))$. A single sensor is selected by
the server with respect to the true value $Z(t)$, resulting in a
selection from the desired distribution. Moreover, $\sensor$'s estimate 
$Z_{\sensor}(t)$ is only updated on rounds when it communicates with
the server under these circumstances.  This allows the estimated
probabilities of all of the sensors to sum to more than one, but has
the benefit of significantly reducing the communication cost in the
star network model under certain assumptions.  
We call the result \emph{Distributed EXP3}, give its pseudocode 
for round $t$ in~\figref{fig:lazy-renorm-code}.

Since the sensors underestimate their normalizers, they may activate more
frequently than in the broadcast model.   Fortunately, the amount of
``overactivation'' remains bounded.
\ifthenelse{\boolean{istechrpt}}{
We prove Theorem~\ref{thm:dexp3-performance} and
Corollary~\ref{cor:dexp3-performance} in Appendix~\ref{sec:lazy-appendix}. 
}{ %
We prove Theorem~\ref{thm:dexp3-performance} and
Corollary~\ref{cor:dexp3-performance} in the full version 
of this paper~\cite{golovin09distributed}.
} 
\begin{theorem}\label{thm:dexp3-performance}
The number of sensor activations in any round of the Distributed \EXPthree algorithm 
is at most \mbox{$\alpha + (e-1)$} in
expectation and $\cO(\alpha + \log n)$ with high probability, and
the number of messages is at most twice the number of activations.
\end{theorem}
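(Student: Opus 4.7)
The plan is to bound the expected number of activations by (i) linearizing the Poisson tail $1-e^{-x}\le x$ to separate an ``ideal'' contribution from the overestimation excess, and (ii) showing the excess is at most $e-1$ via a potential-function argument exploiting the self-refreshing nature of activations. Concretely, I would write $|A_t| = \sum_{\sensor} \mathbb{1}[X_\sensor \ge 1]$ with $X_\sensor \sim \Poisson{\alpha q_\sensor(t)}$ independent across sensors, where $q_\sensor(t) = \rho(w_\sensor(t),Z_\sensor(t))$ is the stale estimate and $p_\sensor(t) := \rho(w_\sensor(t),Z(t))$ is the true probability, with $\sum_\sensor p_\sensor(t) = 1$. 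Because the global $Z(t)$ only grows in EXP3 (weights multiply by $\exp(\eta\pi/p)\ge 1$) and each $Z_\sensor$ is updated only when $\sensor$ activates, $Z_\sensor(t)\le Z(t)$ and hence $q_\sensor(t)\ge p_\sensor(t)$. Linearity plus $1-e^{-x}\le x$ then yield
\[
\E{|A_t|\mid \mathcal{H}_{t-1}} = \sum_{\sensor}\paren{1-e^{-\alpha q_\sensor(t)}} \le \alpha + \alpha \sum_{\sensor}\paren{q_\sensor(t)-p_\sensor(t)}\mbox{.}
\]

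The core of the argument is to show $\alpha \sum_{\sensor}(q_\sensor(t)-p_\sensor(t)) \le e-1$. I would introduce the nonnegative potential $\Phi(t) := \alpha(1-\gamma)\sum_\sensor w_\sensor(t)(Z(t)-Z_\sensor(t))/(Z(t)Z_\sensor(t))$, which equals the excess on the right. The key structural observation is that whenever sensor $\sensor$ activates, the base station returns the latest normalizer, so $Z_\sensor$ is reset and $\sensor$'s contribution to $\Phi$ vanishes; meanwhile $\Phi$ grows only through the weight update of the uniquely \emph{selected} sensor, which inflates $Z(t)$ and hence every stale sensor's per-sensor excess $\eps_\sensor$ simultaneously. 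Because the selected sensor is drawn from the true distribution $p$ (not the stale $q$), the expected growth and the expected activation-driven decrease can be balanced in an amortized accounting that caps $\Phi(t)$ by $e-1$; the constant emerges from bounding $\sum_{\sensor}(1-e^{-\alpha\eps_\sensor})$ via the Poisson tail and the identity $\sum_{k\ge 1}1/k! = e-1$ when the heavy-excess sensors saturate. Substituting back gives $\E{|A_t|}\le \alpha + (e-1)$.

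For the high-probability statement, the activation indicators $\mathbb{1}[X_\sensor\ge 1]$ are independent Bernoulli variables conditional on $\mathcal{H}_{t-1}$, so a standard multiplicative Chernoff bound applied to their sum (with conditional mean $\cO(\alpha)$) yields $|A_t| = \cO(\alpha+\log n)$ with probability at least $1-n^{-\Omega(1)}$. For the message count, each activation costs exactly one sensor-to-base message (reporting $X_\sensor$) and at most one base-to-sensor reply (conveying either a selection notification or the updated $Z(t+1)$), giving the $2|A_t|$ bound. The hard part will be making the potential-function amortization watertight: the EXP3 multiplicative factor $\exp(\eta\pi/p)$ can be very large when $p$ is near the exploration floor $\gamma/n$, so the per-round jump in $Z(t)$ is not uniformly bounded, and the argument must use crucially that selection is drawn from the true distribution $p$ — rather than some adversarial stale distribution — to prevent weight growth from concentrating on exactly those sensors whose $Z_\sensor$ is most stale.
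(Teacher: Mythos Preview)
Your decomposition $\E{|A_t|\mid \mathcal{H}_{t-1}} \le \alpha + \alpha\sum_{\sensor}(q_\sensor(t)-p_\sensor(t))$ is fine, and the high-probability step and message count are essentially what the paper does. The genuine gap is the potential argument for $\alpha\sum_{\sensor}(q_\sensor(t)-p_\sensor(t)) \le e-1$: you sketch an amortization in which activation-driven resets ``balance'' the growth of $Z(t)$, but you never carry it out, and the appeal to $\sum_{k\ge 1}1/k! = e-1$ is not connected to any concrete inequality. You yourself flag that the per-round jump in $Z(t)$ can be enormous when $p_{\sensor^*}$ is near the floor $\gamma/n$; without a quantitative handle on that jump, the potential can spike and there is nothing in your outline that prevents it.

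The paper takes a different route: it works sensor-by-sensor and bounds the \emph{ratio} $q_\sensor(t)/p_\sensor(t)$ rather than the global sum. The crucial missing ingredient is a lemma showing that for \EXPthree with $\eta=\gamma/n$, the total weight grows by at most a factor $1+(e-1)\gamma/n$ per round (using $e^x\le 1+x+(e-2)x^2$ on $[0,1]$ and $x:=\gamma\pi/n \le p$), so each $p_\sensor$ shrinks by at most $(1+\hat\epsilon)$ with $\hat\epsilon=(e-1)\gamma/n$. Given this, if $q_\sensor(t)\ge \lambda p_\sensor(t)$ then at least $t(\lambda)=\ln\lambda/\ln(1+\hat\epsilon)$ rounds passed without $\sensor$ activating, an event of probability at most $(1-\alpha q_\sensor(t))^{t(\lambda)}$. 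Integrating the tail in $\lambda$ and combining with the floor $p_\sensor(t)\ge\epsilon=\gamma/n$ yields $\E{q_\sensor(t)}\le (1+\hat\epsilon/(\alpha\epsilon))\E{p_\sensor(t)} = (1+(e-1)/\alpha)\E{p_\sensor(t)}$; summing over $\sensor$ gives $\E{\sum_\sensor q_\sensor(t)}\le 1+(e-1)/\alpha$ and hence $\E{|A_t|}\le \alpha+(e-1)$. Your potential could perhaps be made to work once you plug in this $\hat\epsilon$ bound, but as written the argument is incomplete precisely at the step you identified as hard.
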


Unfortunately, there is still an $e^{-\alpha}$ probability of nothing
being selected.  To address this, we can set $\alpha = c\ln n$ for
some $c \ge 1$, and if 
nothing is selected, transmit a message to each of
the $n$ sensors to rerun the protocol.  

\begin{cor}\label{cor:dexp3-performance}
There is a distributed implementation of \EXPthree that always selects a
sensor in each round, has the same regret bounds as standard
\EXPthree, ensures that  
the number of sensor activations in any round is 
at most $\ln n + \cO(1)$ in expectation or $\cO(\log n)$ with high
probability, and in which the number of messages is at most twice the number of
activations.
\end{cor}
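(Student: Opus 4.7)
The plan is to instantiate Distributed \EXPthree with $\alpha = \ln n$ and use the retry mechanism suggested in the paragraph preceding the corollary: whenever the protocol selects no sensor, the base station signals a rerun (sending up to $n$ individual messages in the star network model, one per sensor), and the sampling step is repeated with fresh randomness until some sensor activates. Correctness and the regret guarantee then follow immediately from \propref{thm:sampling}: conditional on at least one activation, the selected sensor is drawn from the target \EXPthree distribution, and since successive retries are independent, the overall law of the selected sensor is exactly the \EXPthree distribution. Thus the algorithm simulates \EXPthree perfectly and inherits its regret bound.

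Next I would bound the number of attempts $K$ needed to make a selection. By \propref{thm:sampling}, the probability of zero activations in a single attempt is $e^{-\alpha} = 1/n$, so $K$ is geometric with success probability $1 - 1/n$, giving $\E{K} = 1/(1-1/n) = 1 + \cO(1/n)$ and $K = 1$ with probability at least $1 - 1/n$. By \thmref{thm:dexp3-performance}, a single attempt with $\alpha = \ln n$ produces at most $\alpha + (e-1) = \ln n + \cO(1)$ activations in expectation and $\cO(\alpha + \log n) = \cO(\log n)$ with high probability. Combining these via Wald's identity (valid since $K$ is a stopping time for the independent attempts) yields an expected total of $\E{K}\cdot(\ln n + \cO(1)) = \ln n + \cO(1)$ activations per round. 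For the high-probability bound, conditioning on the event $\{K = 1\}$, which holds with probability at least $1 - 1/n$, reduces the total activation count to that of a single attempt and gives $\cO(\log n)$ with high probability.

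For the message count, each activation in \thmref{thm:dexp3-performance} already accounts for at most two messages (sensor-to-server and server-to-sensor). The only extra cost is the retry broadcast, which consists of at most $n$ messages but occurs only $K - 1$ times, contributing an expected $n \cdot \E{K-1} = n/(n-1) = \cO(1)$ messages in total. Absorbing this additive $\cO(1)$ into the ``at most twice'' bound (which already contains a $\ln n + \cO(1)$ leading term) preserves the claim of the corollary exactly; equivalently, one may charge the $n$ retry signals to the $n$ activations they subsequently trigger across all attempts, so that the $2\times$ ratio holds in an amortized sense.

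The main obstacle will be the bookkeeping for the lazy normalizers $Z_{\sensor}(t)$ across retries. I would verify that the weights $w_{\sensor}$, and hence the $Z_{\sensor}$, are modified only when a sensor is actually \emph{selected}, not merely activated; since no selection happens during failed attempts, the lazy normalizers are unchanged by retries and the per-attempt analysis of \thmref{thm:dexp3-performance} applies without modification. A secondary subtlety is that the standard regret analysis of \EXPthree requires the sampling distribution in each round to be exactly the one prescribed by the current weight vector; the retry scheme is precisely designed to enforce this, so no modification of the regret proof is needed.
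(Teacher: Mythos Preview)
Your proposal is correct and follows essentially the same approach as the paper: set $\alpha = \ln n$, rerun on failure, and combine the per-attempt activation bound from \thmref{thm:dexp3-performance} with the geometric retry count to get $\ln n + \cO(1)$ expected activations and the same regret as \EXPthree. The paper computes expected activations via the identity $\E{X} = \E{Y}/\prob{Y \ge 1}$ (as in \propref{prop:broadcast-activations}) rather than Wald's identity, but these are equivalent here; your explicit check that the lazy normalizers $Z_{\sensor}$ are unchanged by failed retries is a point the paper glosses over.
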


\begin {algorithm}[tb]
\Titleofalgo { Distributed \EXPthree (executing on round $t$)}
\KwIn{Parameters $\alpha, \eta, \gamma \in \reals_{> 0}$, sensor set $\sensors$.}
Let $\rho(x, y) := (1-\gamma)\frac{x}{y} + \frac{\gamma}{|\sensors|}$.\\
\emph{Sensors: %
} \\
\For {\KwEach \emph{sensor $\sensor$ in parallel}}{
  Sample $r_{\sensor}$ uniformly at random from $[0,1]$.\\
  \If {$(r_{\sensor} \ge 1 - \alpha \cdot \rho(w_{\sensor}(t), Z_{\sensor}(t))$}{
    Send $\tuple{r_{\sensor}, w_{\sensor}(t)}$ to the server.\\
    Receive message $\tuple{Z, w}$ from server.\\
    $Z_{\sensor}(t+1) \gets Z$; $w_{\sensor}(t+1) \gets w$. \\
  }\lElse{
     $Z_{\sensor}(t+1) \gets Z_{\sensor}(t)$; $w_{\sensor}(t+1) \gets w_{\sensor}(t)$.\\
  }
}
\emph{Server:}\\
 Receive messages from a set $S$ of sensors. \\
 \lIf{$S = \emptyset$}{Select nothing and wait for next round.}
 \lElse{
     \For {\KwEach \emph{sensor} $\sensor \in S$}{
       $Y_{\sensor} \gets \min \set{x : \prob{X \le x} \ge r_{\sensor}}$, 
       where $X \sim \Poisson{\alpha \cdot \rho(w_{\sensor}(t),Z(t))}$.\\
       Select $\sensor$ with probability
$Y_{\sensor}/\sum_{\sensor' \in S} Y_{\sensor'}$.\\
       Observe the payoff $\payoff$ for the selected
       sensor $\sensor^*$;  
       $w_{\sensor^*}(t+1) \gets w_{\sensor^*}(t) \cdot \exp
       \set{\eta \payoff /  \rho(w_{\sensor^*}(t),Z(t))}$;
       $Z(t+1) \gets Z(t) + w_{\sensor^*}(t+1) -  w_{\sensor^*}(t)$\;
       \lFor{ \KwEach $\sensor \in S \setminus \sensor^*$}{
          $w_{\sensor}(t+1) \gets w_{\sensor}(t)$\;
       }
       \lFor{ \KwEach $\sensor \!\in\! S$}{
         Send $\tuple{Z(t\!+\!1), w_{\sensor}(t\!+\!1)}$ to $\sensor$.\\
    }
  }
}
\caption{Distributed \EXPthree: the \LimitProtocolShortns($\alpha$) with lazy renormalization,
  applied to \EXPthree}
\label{fig:lazy-renorm-code}
\end {algorithm}

\vspace{-2mm}
\subsection{LazyDOG} \label{sec:lazy-dog}
Once we have the distributed \EXPthree variant described above, we can
use it for the bandit subroutines in the \OGunit algorithm
(\cf~\secref{sec:centralized-multi}).  We call the result the \lazydog
algorithm, due to its use of lazy renormalization. The lazy distributed
\EXPthree still samples sensors from the same distribution as the
regular distributed \EXPthree, so \lazydog has precisely the same
performance guarantees with respect to $\sum_{t} \obj_t(S_t)$ as
\dog. It works in the star network communication model, and requires few
messages or sensor activations.  Corollary~\ref{cor:dexp3-performance}
immediately implies the following result.

\begin{cor}\label{cor:lazydog-performance}
The number of sensors that activate each round in 
\lazydog is at most $\K\ln n + \cO(\K)$ in expectation and 
$\cO(\K\log n)$ with high probability, 
the number of messages is at most twice the number of activations, and
the $\paren{1 - 1/e}$-regret of \lazydog is the same as \dog.
\end{cor}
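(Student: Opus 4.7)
The plan is to derive Corollary~\ref{cor:lazydog-performance} as a direct consequence of Corollary~\ref{cor:dexp3-performance} by exploiting the fact that \lazydog is essentially $k$ parallel copies of Distributed \EXPthree, one per stage of the \OGunit template from~\secref{sec:centralized-multi}.

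First, I would explicitly decompose \lazydog into its per-stage bandit subroutines. At each round $t$, \lazydog proceeds through $k$ stages, and stage $i$ invokes one call of Distributed \EXPthree (with $\alpha = c\ln n$ as in the corollary) to select sensor $\sensor_{i}^{t}$. The activations in round $t$ are therefore the disjoint union of the activations incurred by these $k$ independent calls. By Corollary~\ref{cor:dexp3-performance}, each call activates at most $\ln n + \cO(1)$ sensors in expectation, so linearity of expectation gives the overall bound of $k\ln n + \cO(k)$. For the high-probability bound, each stage activates $\cO(\log n)$ sensors with probability at least $1 - n^{-c'}$ for some constant $c'$ (tunable via $\alpha$); a union bound over the $k$ stages then yields $\cO(k \log n)$ total activations with high probability. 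The message count bound (at most twice the activations) transfers stage-by-stage from Corollary~\ref{cor:dexp3-performance}.

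For the regret claim, the key observation is that Distributed \EXPthree, thanks to the lazy renormalization scheme together with the server-side correction using the true $Z(t)$ described in~\figref{fig:lazy-renorm-code}, selects a sensor in each round from exactly the same distribution as centralized \EXPthree. Consequently, the joint distribution over the sequence of selected sets $\{S_t\}$ in \lazydog is identical to that in \dog. Since the $(1-1/e)$-regret is defined as an expectation over this distribution and the $\obj_t$ are the same environment, the regret analysis used for \dog (namely Theorem~$12$ of~\cite{streeter07tr} combined with the per-bandit regret of \EXPthree) carries over verbatim, giving the stated equality.

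The only mildly delicate step will be verifying the regret claim rigorously: I would argue that the marginal distribution of the selected sensor in each stage of \lazydog coincides with that of \dog (and hence with \OGunit running \EXPthree as subroutine), by inspecting the server's selection rule, which uses $Y_{\sensor}$ derived from the Poisson CDF evaluated with the true rate $\alpha \cdot \rho(w_\sensor(t), Z(t))$ rather than the sensor's stale estimate. This is precisely what makes the lazy scheme ``regret-preserving,'' and it is the heart of the argument; the activation and message bounds are then essentially bookkeeping on top of Corollary~\ref{cor:dexp3-performance}.
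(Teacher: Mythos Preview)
Your proposal is correct and matches the paper's approach exactly: the paper states that Corollary~\ref{cor:dexp3-performance} ``immediately implies'' this result, and the surrounding text already notes that lazy Distributed \EXPthree samples from the same distribution as the eager version, so \lazydog inherits \dog's regret. You are simply spelling out the per-stage application of Corollary~\ref{cor:dexp3-performance} (linearity of expectation, union bound, and the distribution-preservation argument) that the paper leaves implicit.
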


\ifthenelse{\boolean{istechrpt}}{
If we are not concerned about the exact number of sensors selected in
each round, but only want to ensure roughly $\K$ sensors are picked in
expectation, then we can reduce the number of sensor activations and
messages to $\cO(\K)$, by running \lazydog with $\K' := \ceil{\K/(1 -
  e^{-\alpha})}$ stages for some constant $\alpha$, and allowing each stage to run the
\LimitProtocol with lazy renormalization \emph{without} rerunning it if nothing is selected.
This is of course optimal up to constants, as we must send at least
one message per selected sensor.

\begin{theorem} \label{thm:better-lazydog}
  The variant of \lazydog that runs the \LimitProtocol$(\alpha)$ with lazy
  renormalization for $\K' := \ceil{\K/(1 - e^{-\alpha})}$ stages, but
  does not rerun it if nothing is selected in a given stage, has
  the following guarantees: ($1$) the number of sensors that activate each round in 
\lazydog is at most 
$\K' (\alpha + e -1)$ in expectation and 
$\cO(\alpha\K\log n)$ with high probability, ($2$) the number  of messages
is at most twice the number of activations, ($3$) the expected number
of sensors selected in each round is at most $\K'$
and $(4)$ its $\paren{1 - 1/e}$-regret is at most $\K' / \K$ times that of \dog.
\end{theorem}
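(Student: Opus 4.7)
The plan is to dispatch claims (1)--(3) directly from \thmref{thm:dexp3-performance} and the protocol's structure, and handle claim (4) by extending \OGunit's analysis to $K'$ stages while absorbing PMS's $e^{-\alpha}$ failure probability. First, each of the $K'$ stages per round is one invocation of distributed \EXPthree without rerunning, so \thmref{thm:dexp3-performance} bounds activations per stage by $\alpha + (e-1)$ in expectation and by $\cO(\alpha + \log n)$ with high probability. Summing the expectation across $K'$ stages gives $K'(\alpha + e - 1)$, and a union bound across the $K'$ stages (noting $K' = \Theta(K)$ for fixed $\alpha$) yields $\cO(\alpha K \log n)$ with high probability, establishing (1). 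Claim (2) follows because each activation consists of one sensor-to-server message and at most one reply. Claim (3) is immediate: $K'$ stages each selecting at most one sensor yield at most $K'$ selections.

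The substance of the proof is claim (4). I would track the potential $\phi_i := \E{\sum_{t=1}^T \obj_t(\tilde S_{t,i})}$, where $\tilde S_{t,i}$ is the set the algorithm has produced after $i$ stages of round $t$, and set $M := \max_{|S| \le K}\sum_t \obj_t(S)$ and $r_{t,i}(v) := \obj_t(\tilde S_{t,i-1} \cup \set{v}) - \obj_t(\tilde S_{t,i-1})$. By \propref{thm:sampling}, each stage's PMS succeeds with probability exactly $1 - e^{-\alpha}$ regardless of \EXPthree's distribution; let $Y_{t,i}$ be the corresponding Bernoulli success indicator. Since \EXPthree observes the filtered reward stream $Y_{t,i}\, r_{t,i}(\cdot)$, its standard $\cO(\sqrt{Tn\log n})$ regret bound $R$ against any fixed arm applies to this stream. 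Taking expectations and using that $Y_{t,i}$ is independent of both \EXPthree's chosen arm and $r_{t,i}$ (only prior-stage $Y$'s enter via $\tilde S_{t,i-1}$) to pull out a $(1-e^{-\alpha})$ factor via Jensen, together with the submodularity identity $\E{\max_v \sum_t r_{t,i}(v)} \ge (M - \phi_{i-1})/K$ (obtained by averaging marginals of elements of the optimal $K$-sized set and invoking monotonicity), yields the recurrence
\[
M - \phi_i \ \le\  \Bigl(1 - \frac{1-e^{-\alpha}}{K}\Bigr)(M - \phi_{i-1}) + R.
\]
Iterating $K'$ times and using $(1-(1-e^{-\alpha})/K)^{K'} \le e^{-K'(1-e^{-\alpha})/K} \le 1/e$---which is precisely why $K' = \lceil K/(1-e^{-\alpha}) \rceil$ was chosen---gives $\phi_{K'} \ge (1-1/e) M - RK/(1-e^{-\alpha}) \ge (1-1/e)M - K' R$, i.e., a $(1-1/e)$-regret of at most $K' R$. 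Since \dog achieves $(1-1/e)$-regret at most $K R$ (the same recurrence specialized to never-failing PMS, cf.~\thmref{thm:dog-performance}), claim (4) follows.

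The main obstacle is handling the coupling between the PMS coins and \EXPthree's feedback in the Jensen step: one must verify that $Y_{t,i}$ is genuinely independent of $r_{t,i}$ even though $r_{t,i}$ depends on the history $\tilde S_{t,i-1}$. This holds because $\tilde S_{t,i-1}$ is determined only by $Y_{t,1}, \ldots, Y_{t,i-1}$, the corresponding \EXPthree draws, and prior-round quantities, while $Y_{t,i}$ is a fresh coin; with this independence in hand, the $(1-e^{-\alpha})$ factor pulls cleanly through both the $\E{\sum_t Y_{t,i} r_{t,i}(v_{t,i})}$ and $\E{\max_v \sum_t Y_{t,i} r_{t,i}(v)}$ expressions, and \EXPthree's adversarial-reward guarantee is applied to the filtered stream as a black box.
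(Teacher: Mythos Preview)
Your handling of claims (1)--(3) matches the paper's proof essentially verbatim: per-stage activation bounds from \thmref{thm:dexp3-performance}, summed (or union-bounded) across $\K'$ stages, with the two-messages-per-activation observation and the at-most-one-selection-per-stage observation.

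For claim (4) your argument is correct but the packaging differs from the paper's. The paper first isolates an auxiliary result (\thmref{thm:faulty-sensors}): it analyzes the \emph{offline} greedy algorithm on ``faulty'' elements $\sensor'$ that independently realize as $\sensor$ with probability $1-\delta_{\sensor}$ and as a null element otherwise, derives the recurrence $\Phi(i+1) \le \Phi(i)\bigl(1-\tfrac{1-\delta}{\K}\bigr)$ on the optimality gap, and concludes that $\K' = \lceil \K/(1-\delta)\rceil$ stages suffice for a $(1-1/e)$ approximation; it then lifts this to the online setting by invoking the ``meta-actions'' machinery of~\cite{streeter07tr}. You instead run the online potential recurrence directly, folding the PMS failure probability into each stage's expected gain. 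Both routes establish the same inequality $M-\phi_i \le \bigl(1-\tfrac{1-e^{-\alpha}}{\K}\bigr)(M-\phi_{i-1}) + R$ and iterate it $\K'$ times; yours is more self-contained, while the paper's separates out a reusable offline lemma at the cost of the meta-actions black box.

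One small point worth tightening: the Jensen step and the quantity $\E{\max_{\sensor}\sum_t r_{t,i}(\sensor)}$ are not quite what you need. \EXPthree's expected-regret guarantee competes against a fixed arm, giving $\max_{\sensor}\E{\sum_t Y_{t,i}\,r_{t,i}(\sensor)}$ (max outside), and the submodularity-plus-averaging argument over the optimal $\K$-set already delivers $\max_{\sensor}\E{\sum_t r_{t,i}(\sensor)} \ge (M-\phi_{i-1})/\K$ directly. With the independence of $Y_{t,i}$ from $r_{t,i}$ that you correctly establish, the $(1-e^{-\alpha})$ factor pulls through this weaker (max-outside) quantity without any appeal to Jensen, and the recurrence follows.
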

\noindent
We defer the proof to Appendix~\ref{sec:faulty}.

} {        
}

\section{Observation-Dependent Sampling} \label{sec:measure-dependent-sampling}

Theorem~\ref{thm:dog-performance} states that
\algoname is guaranteed to do nearly as well as the offline greedy
algorithm run on an instance with objective function $\obj_{\Sigma} :=
\sum_{t} \obj_t$.  Thus the reward of \algoname is asymptotically near-optimal on average.
In many applications, however, we would like to perform well on rounds
with ``atypical'' objective functions.  For example, in an outbreak detection application as we discuss in \secref{sec:experiments}, we would like to get very good data on rounds with
significant events, even if the nearest sensors typically
report ``boring'' readings that contribute very little to the
objective function.   For now, suppose that we are only running a
single MAB instance to select a single sensor in each round.
If we have access to 
a black-box for evaluating $\obj_t$ on round $t$, then 
we can perform well on atypical rounds at the cost of some additional
communication by having each sensor $\sensor$ take a local reading of its
environment and estimate its payoff $\estimate{\payoff}=\obj_{t}(\{v\})$ if
selected.  This value, which serves as a measure of how interesting
its input is, can then be used to decide whether to boost $\sensor$'s probability for
reporting its sensor reading to the server.
In the simplest case, we can imagine that each $\sensor$ has a threshold 
$\thresh_{\sensor}$ such that $\sensor$ activates with probability $1$
if $\estimate{\payoff} \ge \thresh_{\sensor}$, and with its normal
probability otherwise.  
In the case where we select $\K > 1$ sensors
in each round, each sensor can have a threshold for each of the $\K$
stages, where in each stage it computes $\estimate{\payoff}=\obj_{t}(S \cup \{v\}) -
\obj_{t}(S)$ where $S$ is the set of currently selected sensors.
Since the activation probability only goes up, 
we can retain the performance guarantees of \dog if we are
careful to adjust the feedback properly.

Ideally, we wish that the sensors learn what their thresholds
$\thresh_{\sensor}$ should be.  
\ignore{
Consider that in many physical sensing
applications the marginal value derived from a sensor's
reading depends strongly on what other nearby sensors observe and
whether they report their readings to the central server.
If a set of sensors $S$ typically have redundant readings, we would
like a few sensors in $S$ to have low thresholds, and most other
sensors in $S$ to have high thresholds, so that in expectation few
sensors in $S$ report their (redundant) readings in any given round.
In contrast, setting a fixed threshold $\thresh$ for all sensors in
$S$ means that whenever one sensor in $S$ detects a sufficiently
interesting event, all sensors in $S$ report to the server, thus leading to a large number of unnecessary messages.
} %
We treat the selection of $\thresh_{\sensor}$ in each round as an
online decision problem that each $\sensor$ must play.
We construct a particular game that the sensors play, where the
strategies are the thresholds (suitably discretized), there is an  
\emph{activation cost} $\actcost_{\sensor}$ that $\sensor$ pays if 
$\estimate{\payoff}_{\sensor} \ge \thresh_{\sensor}$, and the 
payoffs are defined as follows:
Let $\payoff_{\sensor} = \obj_{t}(S\cup \{v\})-\obj_{t}(S)$ be the
marginal benefit of selecting $\sensor$ given that sensor set $S$ has
already been selected.
Let $A$ be the set of sensors that activate in the current iteration of
the game, and let $\paymaxact{\sensor} := \max \paren{\payoff_{\sensor'}
    : \sensor' \in A \setminus \set{\sensor}}$.
The particular reward function $\gpay_{\sensor}$ we 
choose for each sensor $\sensor$  for each iteration of the game is 
\[
\gpay_{\sensor}(\thresh) = \left\{
\begin{array}{ll}
\actcost_{\sensor} - \max\paren{\payoff_{\sensor} - \paymaxact{\sensor}, 0} & \text{if
}\estimate{\payoff} < \thresh\\
\max\paren{\payoff_{\sensor} - \paymaxact{\sensor}, 0} - \actcost_{\sensor} & \text{if
}\estimate{\payoff} \ge \thresh
\end{array}
\right.
\]
\ignore{
\[
\gpay_{\sensor}(\thresh) = \left\{
\begin{array}{ll}
\actcost_{\sensor} - \paren{\payoff_{\sensor} - \paymaxact{\sensor}} & \text{if
}\estimate{\payoff} < \thresh\\
\paren{\payoff_{\sensor} - \paymaxact{\sensor}} - \actcost_{\sensor} & \text{if
}\estimate{\payoff} \ge \thresh
\end{array}
\right.
\]
}
based on empirical performance. Thus, if a sensor activates ($\estimate{\payoff} \ge \thresh$), its payoff is the improvement over the best payoff $\payoff_{\sensor'}$ among all sensors $\sensor'\in A$ minus its activation cost. In case multiple sensors activate, the highest reward is retained.

In the broadcast model where each sensor can compute its marginal
benefit, we can use any standard no-regret algorithm for combining
expert advice, such as \emph{Randomized} \emph{Weighted}
\emph{Majority} ($\WMR$)~\cite{littlestone94}, to play this game and
obtain no regret guarantees\footnote{We leave it as an open problem to
determine if the outcome is close to optimal when all sensors play low
regret strategies (i.e., is the \emph{price of total
anarchy}~\cite{blum08} small in any variant of this game with a
reasonable way of splitting the value from the information?)}  for
selecting $\thresh_{\sensor}$.  In our context a sensor using $\WMR$
simply maintains weights $w(\thresh_i) = \exp \paren{\eta \cdot
\gpay_{\text{total}}(\thresh_i)}$ for each possible threshold
$\thresh_i$, where $\eta > 0$ is a learning parameter, and
$\gpay_{\text{total}}(\thresh_i)$ is the total cumulative reward for
playing $\thresh_i$ in every round so far.  On each step each
threshold is picked with probability proportional to its weight.  In
the more restricted star network model, we can use a modification of
\WMR that feeds back unbiased estimates for $\gpay_{t}(\thresh_i)$,
the payoff to the sensor for using a threshold of $\thresh_i$ in round
$t$, and thus obtains reasonably good estimates of
$\gpay_{\text{total}}(\thresh_i)$ after many rounds.  We give
pseudocode in~\figref{fig:modifiedWMR}.  In it, we assume that an
activated sensor can compute the reward of playing any threshold.

\begin{algorithm} \label{fig:modifiedWMR}
\Titleofalgo{ Modified \WMR (star network setting)}
\KwIn{parameter $\eta > 0$, threshold set $\set{\thresh_i : i \in [m]}$}
Initialize $w(\thresh_i) \gets 1$ for all $i \in [m]$.\\
\For{ \KwEach\emph{ round} $t = 1, 2, \ldots$}{
  Select $\thresh_i$ with probability $w(\thresh_i) / \sum_{j=1}^m
  w(\thresh_j)$.\\
  \If{\emph{ sensor activates}}{
     Let $\gpay(\thresh_i)$ be the reward for playing $\thresh_i$ in this round of
     the game.  Let $q(\thresh_i)$ be the total probability of activation
     conditioned on $\thresh_i$ being selected (including the
     activation probability that does not depend on local
     observations.)\\
     \For{ \KwEach\emph{ threshold $\thresh_i$}}{
       $w(\thresh_i) \gets w(\thresh_i)\exp\paren{\eta \gpay(\thresh_i)/q(\thresh_i)}$.\\
     }
  }
}
\caption{Selecting activation thresholds for a sensor}
\end{algorithm}

We incorporate these ideas into the \algoname algorithm, to obtain
what we call the \emph{Observation-Dependent}
\emph{Distributed Online} \emph{Greedy} algorithm (\oddog).
In the extreme case that $\actcost_{\sensor} = 0$ for all
$\sensor$ the sensors will soon set their thresholds so low that each sensor activates in
each round.  In this case \oddog will exactly simulate the offline
greedy algorithm run on each round.
In other words, if we let $\gout(f)$ be the result of running the  offline greedy
algorithm on the problem 
\[\argmax\set{f(S) : S \subset \sensors,\ |S| \le \K}\] 
then \oddog will obtain a value of $\sum_t
\obj_t(\gout(\obj_t))$;  in contrast, \dog gets roughly 
$\sum_t \obj_t(\gout(\sum_t \obj_t))$, which may be significantly smaller.
Note that Feige's result~\cite{feige97} implies that the former value is the
best we can hope for from efficient algorithms (assuming $\P \neq
\NP$).
Of course, querying each sensor in each round 
is impractical when querying sensors is
expensive.  In the other extreme case where $\actcost_{\sensor} = \infty$ for all
$\sensor$, \oddog will simulate \dog after a brief learning phase.
In general, by adjusting the activation costs $\actcost_{\sensor}$
we can smoothly trade off the cost of sensor communication with the value of the resulting data.

\ignore{
That is, it obtains a total value of almost
$\obj_{\Sigma}(\gout(\obj_{\Sigma}))$, where 
$\gout(f)$ is the result of running the simple offline greedy
algorithm on the problem 
\[\argmax_{ S \subset \sensors,\ |S| \le \K}\set{f(S)}\mbox{.}\] 
Ideally, we might hope to do better by using more communication and
exploiting local sensing in the algorithm.

Feige's result~\cite{feige97} implies that the best we can hope for from
efficient algorithms (assuming $\P \neq \NP$) 
is to do as well as the offline greedy algorithm
run on each round, i.e., $\sum_t \obj_t(\gout(\obj_t))$.
With unlimited communication and an value oracle for each $\obj_t$, we
can indeed obtain this much value simply by querying every sensor and then running the
offline greedy algorithm in each round.  
}
\ignore{
We suppose that on each round $t$ every sensor $\sensor$ takes a reading of its
environment and uses it to compute, 
for each MAB instance $I$ it participates in, an estimate of its
payoff $\estimate{\payoff} = \estimate{\payoff}(\sensor,t,I)$ if it is selected.  
Then when running the \LimitProtocolShort with lazy renormalization for
instance $I$, $\sensor$ boosts its activation probability by a
multiplicative factor of $\boost_{\sensor, I}(\estimate{\payoff})$, 
where $\boost_{\sensor, I}:[0,1] \to [1, \infty)$ is a function
that $\sensor$ learns to optimize its contribution to the overall solution.
}

\section{Experiments}\label{sec:experiments}
In this section, we evaluate our \dog algorithm on several real-world sensing problems.

\subsection{Data sets}
\vspace{-0.5em}

\paragraph{Temperature data}  
In our first data set, we analyze temperature measurements from the
network of 46 sensors deployed at Intel Research Berkeley. Our
training data consisted of samples collected at 30 second intervals on 3
consecutive days (starting Feb. 28th 2004), the testing data consisted
of the corresponding samples on the two following days. The objective
functions used for this application are based on the expected
reduction in mean squared prediction error $\fEMSE$, as introduced in
\secref{sec:problem}.

\paragraph{Precipitation data}
Our second data set consists of precipitation data collected during
the years 1949 - 1994 in the states of Washington and Oregon
\citep{raindata}. Overall 167 regions of equal area, approximately 50
km apart, reported the daily precipitation. To ensure the data could
be reasonably modeled using a Gaussian process we applied preprocessing as
described in \cite{krause07jmlr}. As objective functions we again use
the expected reduction in mean squared prediction error $\fEMSE$.

\paragraph{Water network monitoring}
Our third data set is based on the application of monitoring for
outbreak detection.  Consider a city water distribution network for
delivering drinking water to households. Accidental or malicious
intrusions can cause contaminants to spread over the network, and we
want to install sensors to detect these contaminations as quickly as
possible. In August 2006, the Battle of Water Sensor Networks (BWSN)
\cite{ostfeld08bwsn} was organized as an international challenge to
find the best sensor placements for a real metropolitan water
distribution network, consisting of 12,527 nodes. In this challenge, a
set of intrusion scenarios is specified, and for each scenario a
realistic simulator provided by the EPA is used to simulate the spread
of the contaminant for a 48 hour period. An intrusion is considered
detected when one selected node shows positive contaminant
concentration. The goal of BWSN was to minimize impact measures, such
as the expected population affected, which is calculated using a
realistic disease model. For a security-critical sensing task such as
protecting drinking water from contamination, it is important to
develop sensor selection schemes that maximize detection performance
even in adversarial environments (i.e., where an adversary picks the
contamination strategy knowing our network deployment and selection
algorithm). The algorithms developed in this paper apply to such
adversarial settings. We reproduce the experimental setup detailed in
\cite{krause08efficient}. For each contamination event $i$, we define
a separate submodular objective function $f_{i}(S)$ that measures the
expected population protected when detecting the contamination from
sensors $S$.  In \cite{krause08efficient}, Krause et al.~showed that
the functions $\obj_{i}(A)$ are monotone submodular functions.

\subsection{Convergence experiments}
In our first set of experiments, we analyzed the convergence of our
\dog algorithm.  For both the temperature [T] and precipitation [R]
data sets, we first run the offline greedy algorithm using the
$\fEMSE$ objective function to pick $k=5$ sensors.  We compare its
performance to the \dog algorithm, where we feed back the same
objective function at every round.  We use an exploration probability
$\gamma=0.01$ and a learning rate inversely proportional to the
maximum achievable reward $\fEMSE(\sensors)$.  \figref{fig:tempconv}
presents the results for the temperature data set.  Note that even
after only a small number of rounds ($\approx 100$), the algorithm
obtains 95\% of the performance of the offline algorithm.  After about
13,000 iterations, the algorithm obtains 99\% of the offline
performance, which is the best that can be expected with a $.01$
exploration probability. \figref{fig:rainconv} show the same
experiment on the precipitation data set.  In this more complex
problem, after 100 iterations, 76\% of the offline performance is
obtained, which increases to 87\% after 500,000 iterations.

\ignore{
  At each round, a different objective
function $f_{t}$ is selected, one for each contamination event.  We 
ran the offline greedy algorithm on the objective
$f_{\Sigma}(S)=\sum_{t}f_{t}(S)$, and then compared its result with
our \dog algorithm.  \figref{fig:waterconv} presents the results of
this experiment.  Note that in contrast to the results in
\figref{fig:tempconv} and \figref{fig:rainconv}, where we always feed
back the same objective function, in the varying objective function
case, there is more variance in the performance. However, 
despite these varying objectives, the average performance still
converges to the value of the offline solution that optimizes the
selection given knowledge of all objective functions.
}
\newcommand{\figwidth}{0.48\textwidth}
\newcommand{\figheightA}{0.22\textwidth}
\newcommand{\figheightB}{0.24\textwidth}

\subsection{Observation dependent activation}

We also experimentally evaluate our \oddog algorithm with observation
specific sensor activations.  We choose different values for the
activation cost $c_{v}$, which we vary as multiples of the total
achievable reward.  The activation cost $c_{v}$ lets us smoothly trade
off the average number of sensors activating each round and the
average obtained reward. The resulting activation strategies are used
to select a subset of size $k=10$ from a collection of 12,527 sensors.
\figref{fig:waterfixed} presents rates of convergence using the \oddog
algorithm under a fixed objective function which considers all
contamination events. In \figref{fig:watervarying}, convergence rates
are presented under a varying objective function, which selects a
different contamination event on each round. For low activation costs,
the performance quickly converges to or exceeds the performance of the
offline solution. Even under the lowest activation costs in our
experiments, the average number of extra activations per stage in the
\oddog algorithm is at most 5. These results indicate that observation
specific activation can lead to drastically improved performance at
small additional activation cost.

\begin{figure*}%
\centering 
\subfigure[\emph{[T] Convergence}]{
\includegraphics[width=\figwidth,height=\figheightB]{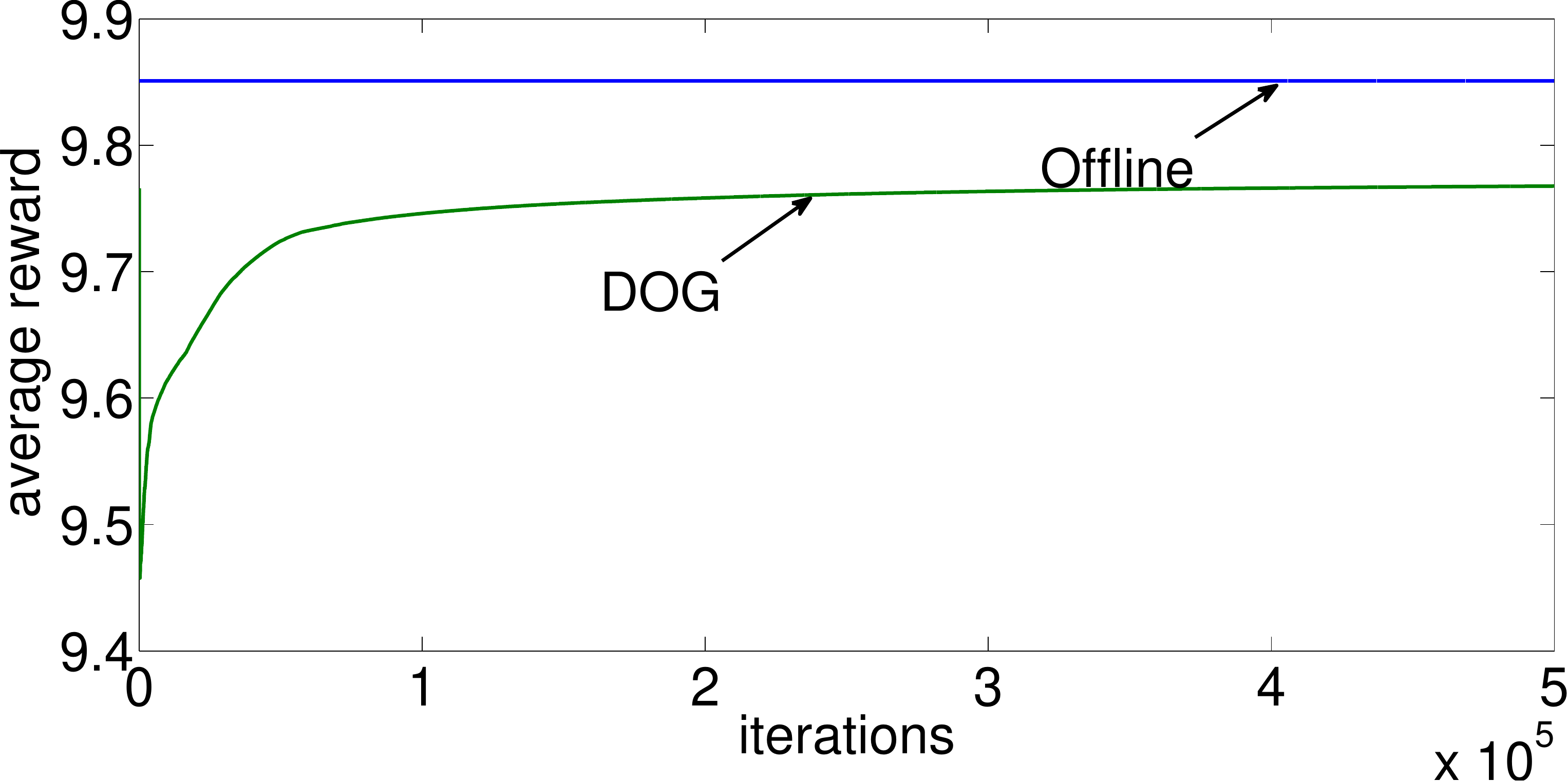}
 \label{fig:tempconv}
 }
\subfigure[\emph{[R] Convergence}]{
\includegraphics[width=\figwidth,height=\figheightB]{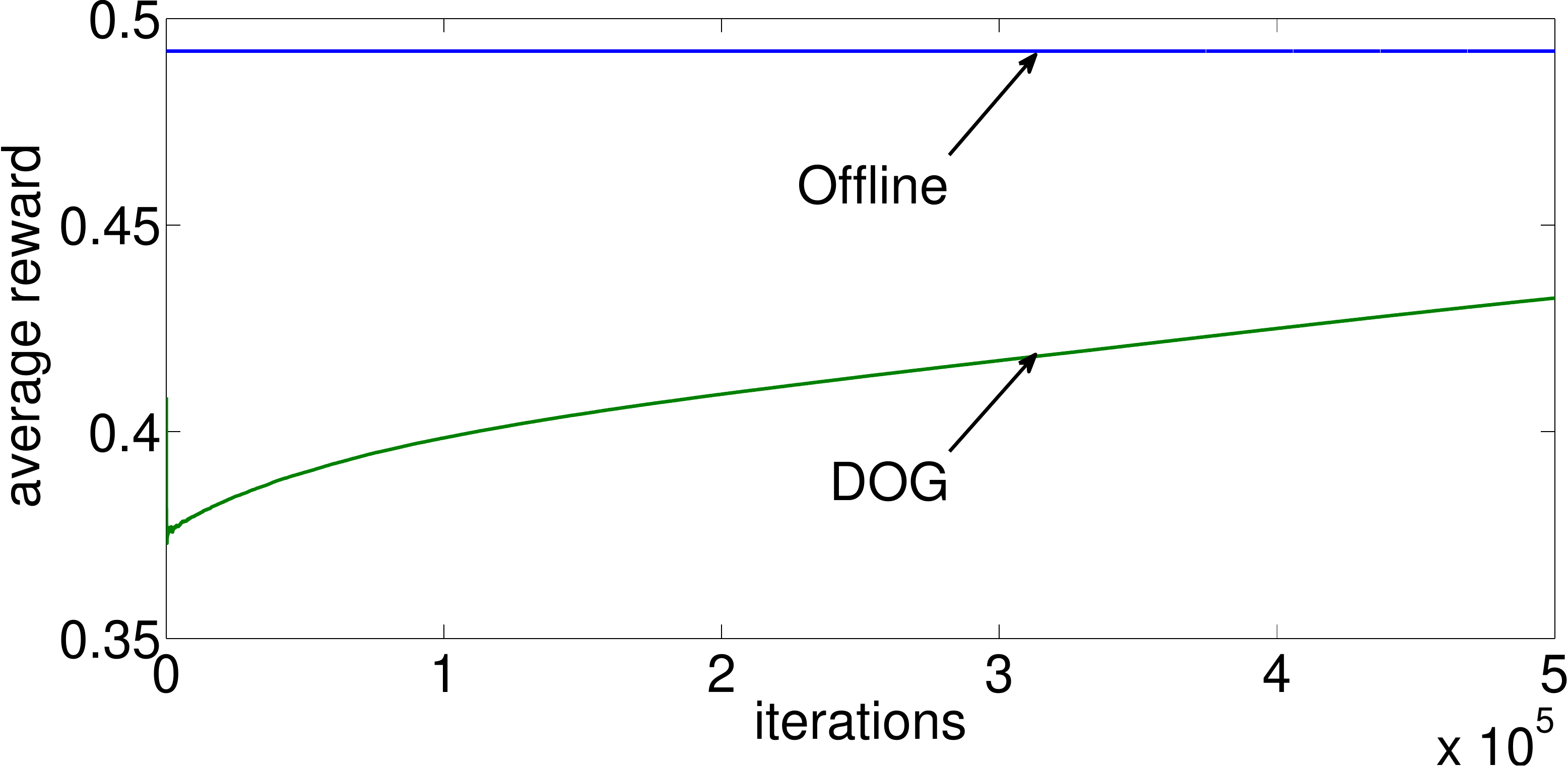}
 \label{fig:rainconv}
 }
 \subfigure[\emph{[W] Constant Objective}]{
\includegraphics[width=\figwidth,height=\figheightB]{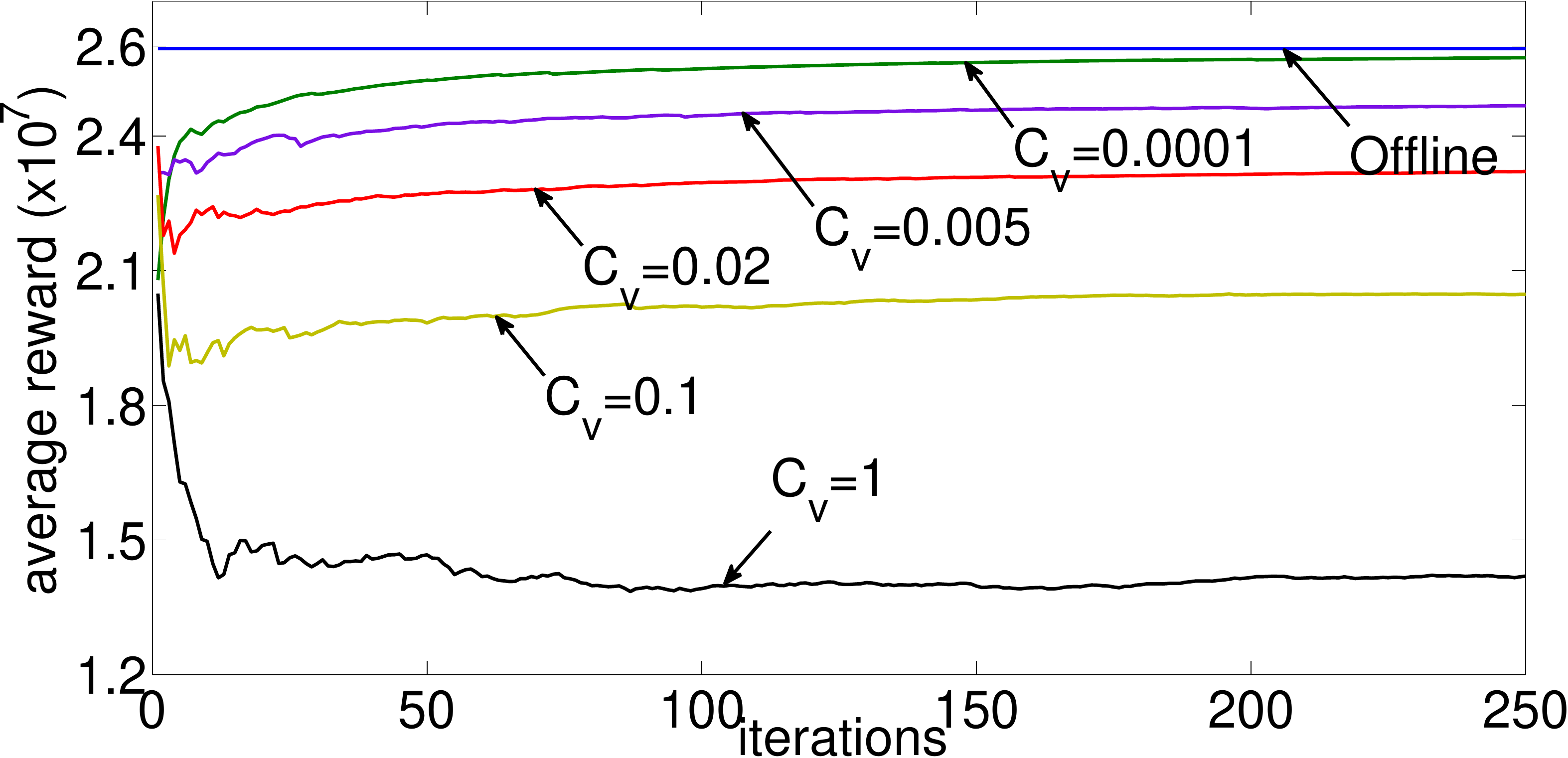}
 \label{fig:waterfixed}
 }
\subfigure[\emph{[W] Varying Objective}]{
\includegraphics[width=\figwidth,height=\figheightB]{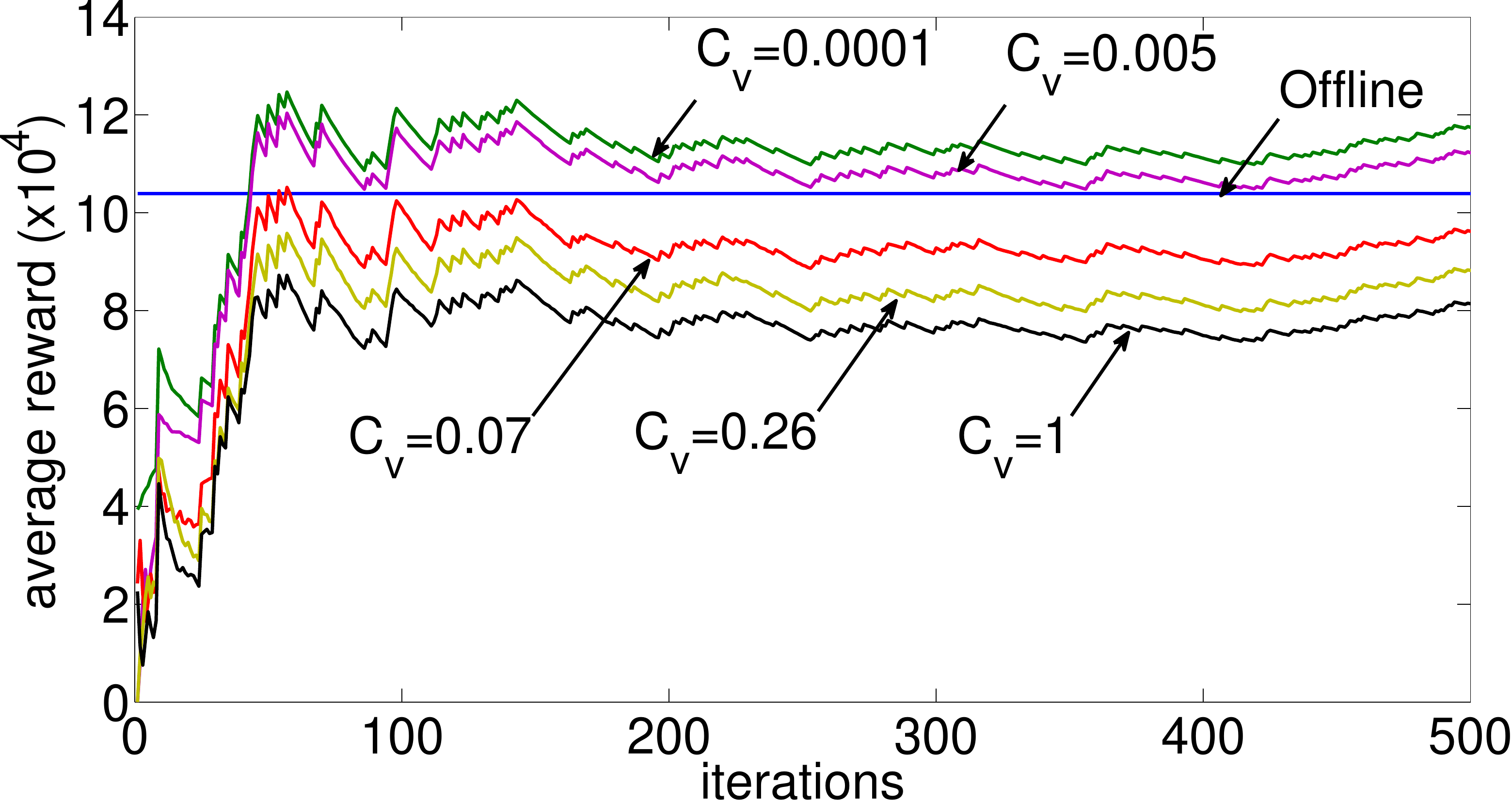}
 \label{fig:watervarying}
 }
\caption{\footnotesize Experimental results on [T]
 Temperature data, [R] precipitation data and [W] water distribution
 network data.
 \label{fig:convergence}
 }%
\end{figure*}

\section{Related Work}\label{sec:related}\vspace{-3mm}
\paragraph{Sensor Selection}
The problem of deciding when to selectively turn on sensors in sensor
networks in order to conserve power was first discussed by
\citet{slijepcevic01power} and \citet{zhao02IDSQ}. Many approaches for
optimizing sensor placements and selection assume that sensors have a
fixed region
\citep{Hochbaum+Maas:1985,Gonzalez:2001,kumar06disk}. These regions
are usually convex or even circular. Further, it is assumed that
everything within a region can be perfectly observed, and everything
outside cannot be measured by the sensors. For complex applications
such as environmental monitoring, these assumptions are unrealistic,
and the direct optimization of prediction accuracy is desired. The
problem of selecting observations for monitoring spatial phenomena has
been investigated extensively in geostatistics~\citet{Cressie:1991},
and more generally (Bayesian) experimental design
\cite{Chaloner1995}. Several approaches have been proposed to
activate sensors in order to minimize uncertainty \cite{zhao02IDSQ} or
prediction error \cite{vldb04}. However, these approaches do not have
performance guarantees.  Submodularity has been used to analyze
algorithms for placing \citep{krause07jmlr} or selecting
\cite{williams07performance} a fixed set of sensors. These approaches
however assume that the model is known in advance.

\paragraph{Submodular optimization}
The problem of centralized maximization of a submodular function has
been studied by \citet{nemhauser78}, who proved that the greedy
algorithm gives a factor $(1-1/e)$ approximation. Several
algorithms have since been developed for maximizing submodular functions
under more complex constraints (see \cite{vondrak07} for
an overview).  Streeter and Golovin developed an algorithm for online
optimization of submodular functions, which we build on in this paper
\cite{streeter08}.

\section{Conclusions}

In this paper, we considered the problem of repeatedly selecting
subsets $S_{t}$ from a large set of deployed sensors, in order to
maximize a sequence of submodular utility functions
$\obj_{1},\dots,\obj_{T}$. We developed an efficient Distributed
Online Greedy algorithm \dog, and proved it suffers no
$(1-1/e)$-regret, essentially the best possible performance obtainable
unless P = NP.  Our algorithm is fully distributed, requiring only a
small number of messages to be exchanged at each round with high
probability.  We analyze our algorithm both in the broadcast model,
and in the star network model, where a separate base station is
responsible for computing utilities of selected sets of sensors.  Our
\lazydog algorithm for the latter model uses lazy renormalization in
order to reduce the number of messages required from $\Theta(n)$ to
$\cO(\K\log n)$, and the server memory required from $\Theta(n)$ to
$\cO(\K + \log n)$, where $\K$ is the desired number of sensors to be
selected.  In addition, we developed \oddog, an extension of \dog that
allows observation-dependent sensor selection.  We empirically
demonstrate the effectiveness of our algorithms on three real-world
sensing tasks, demonstrating how our \dog algorithm's performance
converges towards the performance of a clairvoyant offline greedy
algorithm.  In addition, our results with the \oddog algorithm
indicate that a small number of extra sensor activations can lead to
drastically improved convergence.  We believe that our results provide
an interesting step towards a principled study of distributed active
learning and information gathering.

\paragraph{Acknowledgments}  The authors wish to thank Phillip Gibbons
and the anonymous referees for their valuable help and
suggestions. This research was partially supported by ONR grant
N00014-09-1-1044, NSF grant CNS-0932392, a gift from Microsoft
Corporation and the Caltech Center for the Mathematics of Information.

\ifthenelse{\boolean{istechrpt}}{\newpage}
{ }
\bibliographystyle{plain}
{\fontsize{9}{9}
\bibliography{ipsn-distonline}
}

\ifthenelse{\boolean{istechrpt}}{
\appendix

\section{Results in the Broadcast Model} \label{sec:activation-appendix}

\begin{proof}[of Theorem~\ref{thm:distributed-single-sensor-selection}]
To prove the regret bounds, note that in every round the distribution over sensor
selections in the variant of \EXPthree we describe 
(that uses the distributed multinomial sampling scheme and repeatedly reruns the protocol in
order to always select some sensor in each round) is precisely the
same as the original \EXPthree.  Thus the regret bounds for
\EXPthree~\cite{auer02} carry over unchanged.
We next bound the number of broadcasts.  Fix a round, and let $S$ set
of sensors that activate in that round.  The total number of broadcasts is
then $|S|+2$; using their calibrated clocks, each sensor (re)samples  
$X_{\sensor} \sim \Poisson{\alpha p_{\sensor} }$ and activates if
$X_{\sensor} \ge 1$.  If no sensors activate before a specified
timeout period, the default behavior is to rerun the sampling step.
Eventually $|S| \ge 1$ sensors activate in the same period.  A
distinguished sensor in $S$ then determines the selected sensor $\sensor$,
broadcasts $\id(\sensor)$, and $\sensor$ broadcasts its observed reward.
We prove $\E{|S|} \le \alpha / (1 - e^{-\alpha})$ in
Proposition~\ref{prop:broadcast-activations}.  When $\alpha = 1$, this
gives us the claimed bound on the number of broadcasts.
\end{proof}

\begin{prop} \label{prop:broadcast-activations}
Rerunning the \LimitProtocol until an
element is selected results in at most  $\alpha / (1 - e^{-\alpha})$
elements being activated in expectation.  Moreover, this value is tight.
\end{prop}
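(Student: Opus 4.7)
The plan is to decompose the random total number of activations into the activations from each individual rerun and exploit the independence across reruns. Let $A_i$ denote the number of sensors that activate in the $i$-th run of the protocol, and let $N := \min\{i : A_i \ge 1\}$ be the (random) index of the first successful run. Because we rerun only when nothing activates, runs $1, 2, \ldots, N-1$ each contribute $A_i = 0$ activations, so the total number of activations equals $A_N$, the activation count of the successful run.

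Since the $A_i$ are i.i.d.\ across reruns (fresh random seeds), $A_N$ has the distribution of $A_1$ conditioned on $A_1 \ge 1$. I would first compute the unconditional expectation
\[
\E{A_1} = \sum_{\sensor} \prob{X_{\sensor} \ge 1} = \sum_{\sensor}(1 - e^{-\alpha p_{\sensor}}) \le \sum_{\sensor} \alpha p_{\sensor} = \alpha,
\]
using $1 - e^{-x} \le x$ and $\sum_{\sensor} p_{\sensor} = 1$. The failure probability for a single run was already computed in the text as $\prob{A_1 = 0} = e^{-\alpha}$. Then by the standard identity $\E{A_1 \mid A_1 \ge 1} = \E{A_1}/\prob{A_1 \ge 1}$ (the event $A_1 = 0$ contributes zero to $\E{A_1}$), we conclude
\[
\E{A_N} = \E{A_1 \mid A_1 \ge 1} \le \frac{\alpha}{1 - e^{-\alpha}},
\]
proving the upper bound.

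For tightness, I would exhibit a sequence of instances that drives the inequality $\sum_{\sensor}(1 - e^{-\alpha p_{\sensor}}) \le \alpha$ toward equality. The natural choice is the uniform distribution $p_{\sensor} = 1/n$ on $n$ sensors. There we have $\E{A_1} = n(1 - e^{-\alpha/n})$, and a Taylor expansion gives $n(1-e^{-\alpha/n}) \to \alpha$ as $n \to \infty$. The denominator $1 - e^{-\alpha}$ is unaffected, so $\E{A_N} \to \alpha/(1-e^{-\alpha})$, certifying that the bound cannot be improved.

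There is no real obstacle here beyond getting the conditioning correct: the main conceptual point is the observation that all $N-1$ failing runs contribute nothing, which is what collapses the sum $\sum_{i=1}^{N} A_i$ to a single random variable and makes Wald's identity unnecessary. The only mild subtlety is justifying that the final expectation $\E{A_1}/\prob{A_1 \ge 1}$ is a tight characterization rather than an inequality in the wrong direction, but this is immediate from $A_1 \ge 0$.
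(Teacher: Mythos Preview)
Your proof is correct and follows essentially the same route as the paper: both use the identity $\E{A_1 \mid A_1 \ge 1} = \E{A_1}/\prob{A_1 \ge 1}$, bound the numerator by $\alpha$, and take the denominator to be $1 - e^{-\alpha}$. The only cosmetic difference is in the tightness argument---the paper shows the worst case is the limit by repeatedly splitting a sensor's mass in half (driving $\max_{\sensor} p_{\sensor} \to 0$), whereas you go directly to the uniform distribution on $n$ sensors and let $n \to \infty$; these are the same limit.
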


\begin{proof}
Let $X_{\sensor} \sim \Bernoulli{\alpha \cdot p_{\sensor}}$ be the indicator random
variable for the activation of $\sensor$, and let $X := \sum_{\sensor}
X_{\sensor}$.  The expected number of sensor activations is then
\[
\E{X \ |\ X \ge 1} = \E{X}/\prob{X \ge 1} \mbox{.}
\]
In the limit as $\max_{\sensor} p_{\sensor}$ tends to zero, $X$
converges to a Poisson random variable with mean $\alpha$.
In this case, $\frac{\E{X}}{\prob{X \ge 1}} = \alpha / (1 - e^{-\alpha})$
To see that this is an upper bound, consider an arbitrary distribution
$p$ on the sensors, and fix some $\sensor$ with $x := p_{\sensor} > 0$.
We claim that replacing $\sensor$ with two sensors $\sensor_1$ and
$\sensor_2$ with positive probability mass $x_1$ and
$x_2$ with $x = x_1+x_2$ can only serve to increase the expected number of
sensor activations, because $\E{X}$ is unchanged, and $\prob{X \ge 1}$
decreases.  The latter is true essentially because 
$\prob{\exists i \in \set{1,2}: \sensor_i \text{ activates}} = 1 -
(1-x_1)(1-x_2) = x - x_1x_2 < x$.  To complete the proof, notice that 
repeating this process with $\sensor = \argmax (p_{\sensor})$ and $x_i = x/2$ ensures $X$
converges to a Poisson variable with mean $\alpha$, while only
increasing $\E{X \ |\ X \ge 1}$.
\ignore{
easily verified by noting that conditioning
on the event $\event$ that no sensors in $\sensors \setminus
\set{\sensor, \sensor_1, \sensor_2}$ activate, $\prob{\sensor \text{ activates} | \event} = x$ and 
$\prob{\exists i \in \set{1,2}: \sensor_i \text{ activates} | \event} = 1 -
(1-x_1)(1-x_2) = x - x_1x_2 < x$.
}
\end{proof}

\section{Results in the Star Network\\ Model}  \label{sec:lazy-appendix}

In this section we will prove that lazy renormalization samples
sensors from a proper scaled distribution $(1 - e^{-\alpha})p_{\sensor}$ where $p_{\sensor}$ is the input distribution.
We then 
bound the communication
overhead of using lazy renormalization for any MAB algorithm
satisfying certain assumptions enumerated below, and then show how
these bounds apply to \EXPthree.

\begin{prop}
The lazy renormalization scheme of~\secref{ssec:lazy}, described in pseudocode
in~\figref{fig:lazy-renorm-code}, samples $\sensor$ with probability 
$(1 - e^{-\alpha})p_{\sensor}$,  where $p_{\sensor} =
\rho(w_{\sensor}(t), Z(t))$ is the desired probability mass for $\sensor$.
\end{prop}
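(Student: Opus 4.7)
Write $p_{\sensor} := \rho(w_{\sensor}(t), Z(t))$ for the ``desired'' probability mass and $q_{\sensor} := \rho(w_{\sensor}(t), Z_{\sensor}(t))$ for the value the sensor uses when deciding whether to activate. The plan is to show that the server's use of the true normalizer $Z(t)$ lets it effectively simulate a \LimitProtocolShort with parameter $\alpha$ on the distribution $p$, and then invoke Proposition~\ref{thm:sampling}.

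First I would establish the key monotonicity $Z_{\sensor}(t) \le Z(t)$. This follows from three observations about \EXPthree with its lazy implementation: (i) rewards are non-negative, so every weight $w_{\sensor'}(\tau)$ is non-decreasing in $\tau$, hence so is $Z(\tau)$; (ii) a sensor's weight and normalizer are actually modified (by either the server or the sensor) only in rounds where the sensor communicates with the server, so the sensor's local view of $w_{\sensor}(t)$ agrees with the server's record; (iii) $Z_{\sensor}$ is refreshed to $Z$ exactly on such rounds. Consequently $q_{\sensor} \ge p_{\sensor}$ pointwise, i.e.\ the sensor always slightly over-estimates its own probability mass.

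Next I would compare the lazy activation event with the ``intended'' Poisson event. Let $Y_{\sensor} := \min\{x : \Pr[X \le x] \ge r_{\sensor}\}$ for $X \sim \Poisson{\alpha p_{\sensor}}$, which is precisely what the server computes from $r_{\sensor}$ using inverse-CDF sampling. Since $r_{\sensor} \sim U[0,1]$, $Y_{\sensor} \sim \Poisson{\alpha p_{\sensor}}$, and the $Y_{\sensor}$'s are mutually independent because the $r_{\sensor}$'s are. Moreover $Y_{\sensor} \ge 1$ iff $r_{\sensor} \ge e^{-\alpha p_{\sensor}}$. The inequality chain
\[
1 - \alpha q_{\sensor} \ \le\ e^{-\alpha q_{\sensor}} \ \le\ e^{-\alpha p_{\sensor}},
\]
where the first step uses $1-x \le e^{-x}$ and the second uses $q_{\sensor} \ge p_{\sensor}$, shows that $\{Y_{\sensor} \ge 1\} \subseteq \{r_{\sensor} \ge 1 - \alpha q_{\sensor}\} = \{\sensor \text{ activates}\}$. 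Equivalently, any sensor that does \emph{not} activate automatically has $Y_{\sensor} = 0$.

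Given this containment, I would conclude the argument as follows. The server sees the set $S$ of activated sensors together with their $Y_{\sensor}$ values, and picks $\sensor \in S$ with probability $Y_{\sensor}/\sum_{\sensor' \in S} Y_{\sensor'}$. Because every sensor outside $S$ has $Y_{\sensor} = 0$, this is identical to selecting over \emph{all} sensors with probability $Y_{\sensor}/\sum_{\sensor'} Y_{\sensor'}$ when this sum is positive, and selecting nothing when it is zero. But this is exactly the \LimitProtocolShort($\alpha$) executed on input distribution $p$, so Proposition~\ref{thm:sampling} yields $\Pr[\sensor \text{ selected}] = (1 - e^{-\alpha}) p_{\sensor}$, as required.

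The main obstacle is the monotonicity step $Z_{\sensor}(t) \le Z(t)$; once that is in hand, the rest is the short inequality chain and a reduction to Proposition~\ref{thm:sampling}. One subtlety worth double-checking is that a sensor's stored $w_{\sensor}(t)$ really does coincide with the server's value even between communications -- which holds precisely because the only event that changes $w_{\sensor}$ (namely, being selected) requires $\sensor$ to have activated that round, and the server communicates any resulting update back before the round ends.
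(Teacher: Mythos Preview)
Your proposal is correct and follows essentially the same approach as the paper: define the coupled pair $(X_{\sensor},Y_{\sensor})$ via the shared uniform variable $r_{\sensor}$, show that $\{Y_{\sensor}\ge 1\}\subseteq\{\sensor\text{ activates}\}$ via the inequality chain $1-\alpha q_{\sensor}\le e^{-\alpha p_{\sensor}}$, and then reduce to the \LimitProtocolShort on the true distribution $p$ so that Proposition~\ref{thm:sampling} applies. Your treatment is in fact slightly more complete than the paper's, since you justify the monotonicity $Z_{\sensor}(t)\le Z(t)$ that the paper simply asserts, and you spell out why sensors outside $S$ contribute $Y_{\sensor}=0$ to the selection step.
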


\begin{proof}
Lazy renormalization
selects each sensor $\sensor$ with probability
$(1-e^{-\alpha})p_{\sensor}$, because of the way the random bits
$r_{\sensor}$ are shared in order to implement a coupled distribution for
sensor activation and selection.
Note that it would be sufficient to run the \LimitProtocol on the correct (possibly
oversampled) probabilities, $\alpha p_{\sensor}$, 
since then~\propref{thm:sampling} ensures that each $\sensor$
is selected with probability $(1 - e^{-\alpha}) p_{\sensor}$.
The difficulty is that $\sensor$ does not have access to the correct
normalizer $Z(t)$, but only its estimate (lower bound) for it,
$Z_{\sensor}(t)$.
To overcome this difficulty, we define a joint probability
distribution over two random variables $(X_{\sensor},
Y_{\sensor})$, where 
\[
X_{\sensor}  = X_{\sensor}(R) := \left\{ 
\begin{array}{ll} 
1 & \text{if $R \ge 1 - \alpha \cdot\rho(w_{\sensor}(t), Z_{\sensor}(t))$} \\
0 & \text{otherwise}
\end{array}  \right.
\]
\[
Y_{\sensor} = Y_{\sensor}(R) := \min \set{b \ :\ \sum_{a=0}^b \frac{e^{-\lambda}
    \lambda^{a}}{a!} \ge R }
\]
and $\lambda := \alpha \cdot \rho(w_{\sensor}(t), Z(t))$, and $R$ is
sampled uniformly at random from $[0,1]$.
Now, note that $Y_{\sensor}$ is distributed as $\Poisson{\lambda}$.
Also note that $Y_{\sensor} \ge 1$ implies $X_{\sensor} \ge 1$,
because $Y_{\sensor} \ge 1$ implies $R \ge e^{-\lambda}$ and 
\[
 e^{-\lambda}  \ge 1 - \lambda \ge 1 - \alpha \cdot \rho(w_{\sensor}(t), Z_{\sensor}(t))
\]
since $1 + x \le e^x$ for all $x \in \reals$, and $\rho(w_{\sensor}(t),
Z_{\sensor}(t)) \ge \rho(w_{\sensor}(t), Z(t))$ due to fact
that $Z_{\sensor}(t) \le Z(t)$.
It follows that we can use the event $X_{\sensor} \ge 1$ as a
conservative indicator that $\sensor$ should activate.
In this case, it will send its sampled value for $R$, namely
$r_{\sensor}$, and its weight $w_{\sensor}(t)$ to the server.  
The server knows $Z(t)$, and then can
use $r_{\sensor}$ and $w_{\sensor}(t)$ to compute
$Y_{\sensor}(r_{\sensor})$, 
the sample from $\Poisson{\lambda}$ that $\sensor$ would have
drawn had it known $Z(t)$.  The resulting distribution on selected
sensors is thus exactly the same as in the \LimitProtocol without lazy
renormalization.
Invoking~\propref{thm:sampling} thus completes the proof.
\end{proof}

We now describe the assumptions that are sufficient to ensure lazy
renormalization has low communication costs.
Fix an action $\sensor$ and a multiarmed bandit algorithm.
Let $p_{\sensor}(t) \in [0,1]$ be the random variable denoting the
probability the algorithm assigns to $\sensor$ on round $t$.
The value of $p_{\sensor}(t)$ depends on the random choices made by
the algorithm and the payoffs observed by it on previous rounds.
We assume the following about each $p_{\sensor}(t)$.
\begin{enumerate}
\item $p_{\sensor}(t)$ can be computed from local information
  $\sensor$ possesses and global information the server has.
\item There exists an $\epsilon > 0$ such that $p_{\sensor}(t)
  \ge \epsilon$ for all $t$.
\item $p_{\sensor}(t) < p_{\sensor}(t+1)$ implies $\sensor$ was
  selected in round $t$.
\item There exists $\hat{\epsilon} > 0$ such that 
  $p_{\sensor}(t+1) \ge p_{\sensor}(t)/(1+\hat{\epsilon})$ for all $t$.
\end{enumerate}
Many MAB algorithms satisfy these conditions.
For example, all MAB algorithms with non-trivial no-regret guarantees
against adversarial payoff functions must continually explore all
their options, which effectively mandates $p_{\sensor}(t) \ge
\epsilon$ for some $\epsilon > 0$.  In Lemma~\ref{lem:exp3conditions}
we prove that \EXPthree does so with $\epsilon = \gamma/\numsensors$ and
$\hat{\epsilon} = (e-1)\frac{\gamma}{\numsensors}$, assuming payoffs in $[0,1]$.
In this case, Theorem~\ref{thm:lazy_renorm_ratio} bounds the expected
increase in sensor communications due to lazy renormalization by a
factor of $1  + \frac{e-1}{\alpha}$.

\begin{theorem} \label{thm:lazy_renorm_ratio}
Fix a multiarmed bandit instance with possibly adversarial payoff
functions, and a MAB algorithm 
satisfying the above assumptions on its distribution over actions
$\set{p_{\sensor}(t)}_{\sensor \in \sensors}$.
Let $q_{\sensor}(t)$ be the corresponding random estimates for
$p_{\sensor}(t)$ maintained under lazy renormalization with
oversampling parameter $\alpha$.
Then for all $\sensor$ and $t$, 
\[
\E{q_{\sensor}(t)/p_{\sensor}(t)} \le 1 + \frac{\hat{\epsilon}}{\alpha\epsilon}
\]
and 
\[
\E{q_{\sensor}(t)} \le \paren{1 +
  \frac{\hat{\epsilon}}{\alpha\epsilon}} \E{p_{\sensor}(t)}
\mbox{.}
\]
\end{theorem}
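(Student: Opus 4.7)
The plan is to track two quantities, the additive slack $\Delta(t) := q_{\sensor}(t) - p_{\sensor}(t) \ge 0$ and the multiplicative ratio $r(t) := q_{\sensor}(t)/p_{\sensor}(t) \ge 1$, and analyze their one-step dynamics under the lazy renormalization sampling rule. The key observation is that when sensor $\sensor$ activates on round $t$ (an event of conditional probability $\alpha q_{\sensor}(t)$), the server transmits the current global normalizer and weight back to $\sensor$, which forces $q_{\sensor}(t+1) = p_{\sensor}(t+1)$, giving $\Delta(t+1) = 0$ and $r(t+1) = 1$. When $\sensor$ does not activate, $q_{\sensor}$ is frozen at its previous value, and assumption (4) implies $p_{\sensor}(t+1) \ge p_{\sensor}(t)/(1+\hat{\epsilon})$, which yields $\Delta(t+1) \le \Delta(t) + \hat{\epsilon}\, p_{\sensor}(t)$ and $r(t+1) \le (1+\hat{\epsilon})\, r(t)$. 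Throughout, assumption (2) provides the uniform bound $q_{\sensor}(t) \ge p_{\sensor}(t) \ge \epsilon$, so the activation probability is at least $\alpha\epsilon$ in every round.

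For the second claim, taking conditional expectation of the $\Delta$ dynamics and using $q_{\sensor}(t) \ge \epsilon$ together with $\Delta(t), p_{\sensor}(t) \ge 0$, one obtains the linear recursion
\[
\E{\Delta(t+1) \mid \mathcal{F}_t} \le (1-\alpha q_{\sensor}(t))\bigl(\Delta(t) + \hat{\epsilon}\, p_{\sensor}(t)\bigr) \le (1-\alpha\epsilon)\,\Delta(t) + \hat{\epsilon}\, p_{\sensor}(t).
\]
Taking full expectations, iterating from $\Delta(0) = 0$, and combining with an iterated application of assumption (4) to control $\E{p_{\sensor}(s)}$ for $s < t$ in terms of $\E{p_{\sensor}(t)}$, one obtains $\E{\Delta(t)} \le (\hat{\epsilon}/(\alpha\epsilon))\, \E{p_{\sensor}(t)}$, which rearranges to the second claim since $\E{q_{\sensor}(t)} = \E{p_{\sensor}(t)} + \E{\Delta(t)}$.

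For the first claim, the analogous conditional computation is
\[
\E{r(t+1) \mid \mathcal{F}_t} \le (1+\hat{\epsilon})\, r(t) + \alpha q_{\sensor}(t)\bigl(1 - (1+\hat{\epsilon})\,r(t)\bigr).
\]
Since $r(t) \ge 1$, the parenthesized factor is nonpositive, so one must apply the \emph{sharper} bound $q_{\sensor}(t) = p_{\sensor}(t)\,r(t) \ge \epsilon\, r(t)$ (the coarser $q \ge \epsilon$ turns out to give a divergent bound) to upper-bound this negative product; then, taking full expectations and applying Jensen's inequality $\E{r(t)^2} \ge \E{r(t)}^2$ closes the recursion in the scalar quantity $\mu_t := \E{r(t)}$, yielding a concave quadratic recursion $\mu_{t+1} \le g(\mu_t)$ with $g(x) := (1+\hat{\epsilon}+\alpha\epsilon)\, x - (1+\hat{\epsilon})\alpha\epsilon\, x^2$. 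Setting $R' := 1 + \hat{\epsilon}/(\alpha\epsilon)$, one checks algebraically that $g(R') \le R'$, and in the natural smallness regime (where the apex $x^* = (1+\hat{\epsilon}+\alpha\epsilon)/\bigl(2(1+\hat{\epsilon})\alpha\epsilon\bigr)$ of $g$ lies at or above $R'$, which is implied by $(1+2\hat{\epsilon})(\hat{\epsilon}+\alpha\epsilon) \le 1$) the function $g$ is monotone increasing on $[1, R']$; a straightforward induction starting from $\mu_0 = 1$ then yields $\mu_t \le R'$ for all $t$. The main obstacle is precisely this last step: closing the inherently quadratic conditional bound into a scalar recursion loses second-moment information through Jensen, and one must verify the monotonicity of $g$ on $[1, R']$ (equivalently, the location of the apex $x^*$) to push the induction through in a way that covers the \EXPthree application invoked immediately afterward.
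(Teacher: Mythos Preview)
Your one-step Lyapunov approach is genuinely different from the paper's argument. The paper does not run a recursion at all: it conditions on the value of $q_{\sensor}(t)$, observes that $q_{\sensor}(t)/p_{\sensor}(t) \ge \lambda$ forces at least $t(\lambda) := \ln\lambda/\ln(1+\hat\epsilon)$ consecutive non-activation rounds since the last reset, and hence
\[
\Pr\!\bigl[q_{\sensor}(t) \ge \lambda\, p_{\sensor}(t)\ \big|\ q_{\sensor}(t)\bigr] \le (1-\alpha q_{\sensor}(t))^{t(\lambda)} = \lambda^{-\omega},\qquad \omega := \frac{\ln(1/(1-\alpha q_{\sensor}(t)))}{\ln(1+\hat\epsilon)}\ge \frac{\alpha q_{\sensor}(t)}{\hat\epsilon}.
\]
Integrating the tail gives the conditional bound $\E{q/p \mid q} \le \min\!\bigl(\alpha q/(\alpha q - \hat\epsilon),\ q/\epsilon\bigr)$, which is maximized at $q = \hat\epsilon/\alpha + \epsilon$ and equals exactly $1+\hat\epsilon/(\alpha\epsilon)$ there. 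The second inequality of the theorem is then derived from this \emph{conditional} bound via Jensen: $q/\Econd{p}{q} \le \Econd{q/p}{q} \le 1+\hat\epsilon/(\alpha\epsilon)$ for every value of $q$, so taking expectation in $q$ gives $\E{q}\le (1+\hat\epsilon/(\alpha\epsilon))\E{p}$. This route produces the stated constants with no side conditions on the parameters.

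Your additive-slack argument for the second claim does not deliver the constant you assert. Iterating $\E{\Delta(t+1)}\le (1-\alpha\epsilon)\E{\Delta(t)}+\hat\epsilon\,\E{p(t)}$ from $\Delta(0)=0$ gives $\E{\Delta(t)}\le \hat\epsilon\sum_{s<t}(1-\alpha\epsilon)^{t-1-s}\E{p(s)}$, and controlling $\E{p(s)}$ by $(1+\hat\epsilon)^{t-s}\E{p(t)}$ via assumption~(4) leaves a geometric series with ratio $(1-\alpha\epsilon)(1+\hat\epsilon)$; summing gives
\[
\E{\Delta(t)} \ \le\ \frac{\hat\epsilon(1+\hat\epsilon)}{\alpha\epsilon(1+\hat\epsilon)-\hat\epsilon}\ \E{p(t)},
\]
and only under the restriction $(1-\alpha\epsilon)(1+\hat\epsilon)<1$. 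This is strictly larger than $\hat\epsilon/(\alpha\epsilon)$ and introduces a parameter constraint absent from the theorem. Note also that your (correct) unconditional bound $\E{r(t)}\le R'$ from the ratio recursion does not by itself imply $\E{q}\le R'\,\E{p}$; for that you need the conditional bound $\Econd{q/p}{q}\le R'$, which is precisely what the paper's tail-integral computation provides and what your recursion cannot recover. Your first-claim argument is sound but, as you note, requires the apex condition $(1+2\hat\epsilon)(\hat\epsilon+\alpha\epsilon)\le 1$; the paper's proof avoids this as well.
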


\begin{proof}
Fix $\sensor$, and let $p(t) := p_{\sensor}(t)$, $q(t) := q_{\sensor}(t)$.
We begin by bounding $\prob{q(t) \ge \lambda p(t)}$ for $\lambda \ge
1$.  Let $t_0$ be the most recent round in which $q(t_0) = p(t_0)$.
We assume $q(0) = p(0)$, so $t_0$ exists.
Then $q(t) = p(t_0) \ge \lambda p(t)$  implies $p(t_0)/p(t)
\ge \lambda$.  By assumption $p(t')/p(t'+1) \le (1+\hat{\epsilon})$
for all $t'$, so $p(t_0)/p(t) \le (1+\hat{\epsilon})^{t-t_0}$.
Thus $\lambda \le (1+\hat{\epsilon})^{t-t_0}$ and 
$t-t_0 \ge \ln(\lambda)/ \ln (1+\hat{\epsilon})$. 
Define $t(\lambda) := \ln(\lambda)/ \ln (1+\hat{\epsilon})$.

By definition of $t_0$, there were no activations under lazy renormalization in rounds $t_0$
through $t-1$ inclusive, which occurs with probability 
$\prod_{t' = t_0}^{t-1} (1-\alpha q(t'))$ $=$ $(1-\alpha
q(t))^{t-t_{0}} $ $\le$ $(1-\alpha q(t))^{\ceil{t(\lambda)}}$, where $\alpha$ is the
oversampling parameter in the protocol.  
We now bound $\Econd{q(t)/p(t)}{q(t)}$.
Recall that $\E{X} = \int_{x = 0}^{\infty} \prob{X \ge x}dx$ for any non-negative
random variable $X$.  It will also be convenient to define 
$\lamexp := \ln(1/(1 - \alpha q(t))) / \ln(1+\hat{\epsilon})$ and assume for now
that $\lamexp > 1$.
Conditioning on $q(t)$, we see that 
\[
\begin{array}{lcl}
 \vspace{2mm}
\Econd{q(t)/p(t)}{q(t)} & = & \int_{\lambda = 0}^{\infty} \prob{q(t) \ge
  \lambda p(t)}d\lambda \\ 
 \vspace{2mm}
  & = & 1 + \int_{\lambda = 1}^{\infty} \prob{q(t) \ge
  \lambda p(t)}d\lambda \\ 
\vspace{2mm}
 & \le & 1 + \int_{\lambda = 1}^{\infty} (1-\alpha q(t))^{t(\lambda)}d\lambda\\
 \vspace{2mm}
 & = & 1 + \int_{\lambda = 1}^{\infty}
\lambda^{\ln(1-\alpha q(t))/\ln(1+\hat{\epsilon}) }d\lambda \\
 \vspace{2mm}
 & = & 1 + \int_{\lambda = 1}^{\infty} \lambda^{-\lamexp}d\lambda \\ 
 \vspace{2mm}
 & = & 1 + \frac{1}{\lamexp - 1}
\end{array}
\]
Using $\ln\paren{\frac{1}{1-x}} \ge x$ for all $x < 1$ and 
$\ln(1+x) \le x$ for all $x > -1$, we can show that 
$\lamexp \ge \alpha q(t)/\hat{\epsilon}$
so $1+\frac{1}{\lamexp - 1} \le \alpha
q(t)/(\alpha q(t) - \hat{\epsilon})$.
Thus, if $\alpha q(t) > \hat{\epsilon}$ then $\lamexp > 1$ and we
obtain 
$\Econd{q(t)/p(t)}{q(t)} \le  \alpha q(t)/(\alpha q(t) - \hat{\epsilon})$.

If $q(t) >> \hat{\epsilon}$, this gives a good bound.  If $q(t)$ is small, we
rely on the assumption that $p(t) \ge \epsilon$ for all $t$ to get a trivial
bound of $q(t)/p(t) \le q(t)/\epsilon$.
We thus conclude 
\begin{equation} \label{eqn:ratio-bound}
\Econd{q(t)/p(t)}{q(t)} \le \min \paren{\alpha q(t)/(\alpha q(t) - \hat{\epsilon}) ,\ q(t)/\epsilon}\mbox{.}
\end{equation}
Setting $q(t) = (\hat{\epsilon}/\alpha + \epsilon)$ to maximize this
quantity yields an unconditional bound of 
$\E{q(t)/p(t)} \le 1 + \hat{\epsilon}/\alpha\epsilon$.

To bound $\E{q(t)}$ in terms of $\E{p(t)}$, note that for all $q$
\begin{align*}
q/\Econd{p(t)}{q(t)=q} & \le \Econd{q(t)/p(t)}{q(t)=q}\\
                    & \le 1 + \hat{\epsilon}/\alpha\epsilon
\end{align*}
where the first line is by Jensen's inequality, and the second is by 
equation~\ref{eqn:ratio-bound}.
Thus $q \le \paren{1 +
  \hat{\epsilon}/\alpha\epsilon}\Econd{p(t)}{q(t)=q}$ for all $q$.
Taking the expectation with respect to $q$ then proves 
$\E{q_{\sensor}(t)} \le \paren{1 +
  \frac{\hat{\epsilon}}{\alpha\epsilon}} \E{p_{\sensor}(t)}$
as claimed.
\end{proof}

\ignore{
\begin{theorem}
Say a sensor \emph{activates} in any round in which it sends a message
to the server.  Under the above assumptions on $p_{\sensor}(t)$, for
any sensor $\sensor$ the expected time $\mueager$ between sensor
activations with renormalization on each round is at most $(1 +
\hat{\epsilon}/\epsilon)$ times the expected time $\mulazy$ between
sensor activations with lazy renormalization.
\end{theorem}

\begin{proof}
Fix a sensor $\sensor$.  This sensor activates more frequently under
lazy renormalization only due to the fact that its estimate for 
$p_{\sensor}(t)$, which we denote by $\bar{p}_{\sensor}(t)$, may be too high.
Suppose $\sensor$ was last activated at time $\tau$.
The number of rounds before $\sensor$ is activated again, under lazy
renormalization, is simply equal to the number of coin tosses of
a coin with bias $p_{\sensor}(\tau+1)$ to come up heads, which is 
$\mulazy = 1/p_{\sensor}(\tau+1)$ in expectation.
Under eager renormalization (where $p_{\sensor}(t)$ is known exactly), 
the expected number of rounds $\mueager$ before $\sensor$ is activated again is 
\[
\mueager(\tau) = \sum_{t = \tau+1}^{\infty} \paren{t-\tau}\paren{\prod_{t'=\tau+1}^{t-1}
  \paren{1 - p_{\sensor}(t')}}p_{\sensor}(t) 
\]
It is not too hard to see that this expression is maximized by making
the $p_{\sensor}(t')$ terms shrink as fast as possible.
That is, if we set 
$p_{\sensor}(\tau+i) = p_{\sensor}(\tau+1) \cdot \paren{1+\hat{\epsilon}}^{1-i}$, then 
the above expression provides an upper bound on $\mueager$.
Under these circumstances, the above expression satisfies the recurrence
\[
f(p) = 1 + (1-p)f(p/(1+\hat{\epsilon}))
\] 
where $f(p)$ is the value of $\mueager$ assuming $p_{\sensor}(\tau+1) = p$.
Solving it yields 
\[
f(p) = \frac{1}{p(1+\hat{\epsilon}) - \hat{\epsilon}} \mbox{.}
\]
Simple algebra reveals that for all $c > 1$, if $p \ge \hat{\epsilon}c/(c-1)$ then 
$\mueager \le f(p) \le c/p = c \cdot \mulazy$.
On the other hand, if $p \le \hat{\epsilon}c/(c-1)$, we can use the
assumption that $p_{\sensor}(t) \ge \epsilon$ for all $t$ to show that 
$\mueager \le 1/\epsilon$, implying 
that $\mueager/\mulazy \le p/\epsilon \le \frac{\hat{\epsilon
    c}}{\epsilon (c-1)}$.  Balancing out the ratios of $c$ and
$\frac{\hat{\epsilon c}}{\epsilon (c-1)}$, we obtain a ratio of 
$c^* = 1 + \hat{\epsilon}/\epsilon$.
\end{proof}

\begin{cor}
Under the above assumptions on $p_{\sensor}(t)$, when \EXPthree is used, 
$\mueager / \mulazy \le e$.
\end{cor}
} %

\begin{lemma} \label{lem:exp3conditions}
\EXPthree with $\eta = \gamma/n$ 
satisfies the conditions of Theorem~\ref{thm:lazy_renorm_ratio} 
with $\epsilon = \gamma/\numsensors$ and $\hat{\epsilon} =
(e-1)\frac{\gamma}{\numsensors}$.
\end{lemma}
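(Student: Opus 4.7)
The plan is to verify the four conditions of Theorem~\ref{thm:lazy_renorm_ratio} one at a time for \EXPthree, in which $p_{\sensor}(t) = (1-\gamma)\frac{w_{\sensor}(t)}{Z(t)} + \frac{\gamma}{n}$ and weights are updated via $w_{\sensor}(t+1) = w_{\sensor}(t) \exp(\eta r / p_{\sensor}(t))$ only for the selected sensor $\sensor$ (with observed reward $r \in [0,1]$).

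Conditions 1--3 should be essentially immediate. For (1), the sensor already stores $w_{\sensor}(t)$ locally and the server stores $Z(t)$, while $\gamma$, $\eta$, and $n$ are global constants, so each sensor can compute $p_{\sensor}(t)$ from a single scalar obtained from the server. For (2), the explicit exploration term $\gamma/n$ gives $p_{\sensor}(t) \ge \gamma/n$, yielding $\epsilon = \gamma/n$. For (3), I would argue that if $\sensor$ is not selected at round $t$, then $w_{\sensor}$ stays constant while $Z$ can only grow (since the selected sensor's weight grows by a factor $\exp(\eta r / p) \ge 1$), so $p_{\sensor}(t+1) \le p_{\sensor}(t)$; contrapositively, a strict increase forces selection.

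Condition (4) is the main obstacle and requires a short calculation. Suppose some $\sensor' \neq \sensor$ is selected at time $t$, causing $Z$ to grow by $\Delta := w_{\sensor'}(t)(\exp(\eta r/p_{\sensor'}(t)) - 1)$. Writing $a := (1-\gamma)w_{\sensor}(t)/Z(t)$ and $b := \gamma/n$, and $\delta := \Delta/Z(t)$, one checks by a short manipulation that
\[
\frac{p_{\sensor}(t)}{p_{\sensor}(t+1)} \;=\; \frac{(a+b)(1+\delta)}{a+b+b\delta} \;\le\; 1 + \delta,
\]
so it suffices to bound $\delta$. Since $p_{\sensor'}(t) \ge \gamma/n$ and $\eta = \gamma/n$, we have $\eta r / p_{\sensor'}(t) \le 1$; applying the inequality $e^x - 1 \le (e-1)x$ valid on $[0,1]$ gives $\exp(\eta r/p_{\sensor'}(t)) - 1 \le (e-1)\eta r/p_{\sensor'}(t)$. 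Combined with $w_{\sensor'}(t)/Z(t) \le p_{\sensor'}(t)/(1-\gamma)$, this produces $\delta \le (e-1)\eta r/(1-\gamma) \le (e-1)\eta/(1-\gamma)$.

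The final subtlety is reconciling the factor $1/(1-\gamma)$ with the claimed $\hat{\epsilon} = (e-1)\gamma/n$. I would handle this by tightening the intermediate step: note that $a/(a+b) \le 1-\gamma \cdot (\text{something})$, or alternatively use the sharper bound $w_{\sensor'}(t)(\exp(\eta r/p_{\sensor'}(t)) - 1) \le (e-1) \eta r \cdot Z(t)/(n)$ by replacing $w_{\sensor'}(t) \cdot 1/p_{\sensor'}(t)$ by the looser but cleaner bound $w_{\sensor'}(t)/(\gamma/n) \le nZ(t)/\gamma \cdot (\text{wt ratio})$, and absorbing a $(1-\gamma)$ factor through the $a+b$ normalization in the inequality $p_{\sensor}(t)/p_{\sensor}(t+1) \le 1+\delta \cdot a/(a+b)$, which is actually sharper than what I wrote above. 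Either way, the resulting constant is at most $(e-1)\gamma/n$ (using $\eta = \gamma/n$), giving the claimed $\hat{\epsilon}$.
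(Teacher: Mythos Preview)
Your treatment of conditions (1)--(3) is correct and matches the paper.  For condition (4) you also take the paper's route: bound the normalizer growth $Z(t+1)/Z(t)=1+\delta$ and conclude $p_{\sensor}(t+1)\ge p_{\sensor}(t)/(1+\delta)$ for every non-selected $\sensor$.  Your bound $\delta\le (e-1)\eta/(1-\gamma)$ is valid.  The gap is the final paragraph, which is not an argument: both ``fixes'' contain literal placeholders (``$1-\gamma\cdot(\text{something})$'', ``$(\text{wt ratio})$'') and neither works when you try to carry it out.  In particular, your sharper inequality $p_{\sensor}(t)/p_{\sensor}(t+1)\le 1+\delta\cdot a/(a+b)$ involves the \emph{non-selected} sensor's ratio $a/(a+b)$, which can be arbitrarily close to $1$ (take $w_{\sensor}$ large), so it cannot absorb the $1/(1-\gamma)$ factor coming from the \emph{selected} sensor's weight-to-probability conversion.

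The paper removes the stray factor by organizing the estimate around the \emph{selected} sensor's probability $p:=p_{\sensor'}(t)$ rather than its weight ratio.  Writing $x:=\eta\payoff$, it expresses the normalizer growth as $W(t+1)/W(t)=1+p\,(e^{x/p}-1)$, applies the tighter inequality $e^{y}\le 1+y+(e-2)y^{2}$ on $[0,1]$ to get $1+x+(e-2)x^{2}/p$, and then uses $x\le \gamma/n\le p$ so that the leading $p$ cancels against the $1/p$, yielding $W(t+1)/W(t)\le 1+(e-1)x\le 1+(e-1)\gamma/n$.  The idea you are missing is precisely this cancellation: arrange the bound so that the prefactor multiplying $(e^{x/p}-1)$ is $p$ itself, not $w_{\sensor'}/Z$, and the $1/(1-\gamma)$ never appears.
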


\begin{proof}
The former equality is an easy observation.
To prove the latter equality, fix a round $t$ and a selected action
$\sensor$.  Let 
$w_{\sensor}(t)$ be the weight of $\sensor$ in round $t$, and $W(t)$ be the
total weight of all actions in round $t$.
Let $\payoff$ be the payoff to $\sensor$ in round $t$.
Given the update rule 
$w_{\sensor}(t+1) =
w_{\sensor}(t)\exp\paren{\frac{\gamma}{\numsensors}\frac{\payoff(\sensor,t)}{p_{\sensor}(t)}}$,
only the probabilities of the other actions will be decreased.
It is not hard to see that they will be decreased by a multiplicative factor of at most 
$W(t)/W(t+1)$, no matter what the learning parameter $\gamma$ is.
By the update rule, 
\[
W(t+1) = W(t) +
w_{\sensor}(t)\paren{\exp\paren{\frac{\gamma}{\numsensors}
\frac{\payoff}{p_{\sensor}(t)}}
    - 1}\mbox{.}
\]
Let $p := p_{\sensor}(t)$ and $x :=
\frac{\gamma}{\numsensors}\payoff$.  
Dividing the above equation by $W(t)$, we get 
\begin{align}
\frac{W(t+1)}{W(t)} & = 1 + p\paren{\exp\paren{x/p} - 1} \\
 & \le 1 + p\paren{x/p + (e-2)(x/p)^2}\\
 & \le 1 + x + (e-2)x^2/p
\end{align}
where in the second line we have used 
$e^x \le 1 + x + (e-2)x^2$ for $x \in [0,1]$.
Note $\payoff \le 1$ implies $x \le \gamma/n \le p$, so 
$\frac{W(t+1)}{W(t)} \le 1 + (e-1)x \le 1 +
(e-1)\frac{\gamma}{\numsensors}$.
It follows that setting $\hat{\epsilon} =
(e-1)\frac{\gamma}{\numsensors}$ is sufficient to ensure 
$p_{\sensor}(t+1) \ge p_{\sensor}(t)/(1+\hat{\epsilon})$ for all $t$.
\end{proof}

We now prove Theorem~\ref{thm:dexp3-performance} and Corollary~\ref{cor:dexp3-performance}.

\begin{proof}[of Theorem~~\ref{thm:dexp3-performance}.]
We prove in Lemma~\ref{lem:exp3conditions} that \EXPthree satisfies
the conditions of Theorem~\ref{thm:lazy_renorm_ratio} 
with $\epsilon = \gamma/\numsensors$ and $\hat{\epsilon} =
(e-1)\frac{\gamma}{\numsensors}$.  Thus by Theorem~\ref{thm:lazy_renorm_ratio} 
\begin{align*}
\E{\sum_{\sensor} q_{\sensor}(t)} & \le \paren{1 +
  (e-1)/\alpha}\E{\sum_{\sensor} p_{\sensor}(t)}\\ & = \paren{1 +
  (e-1)/\alpha} 
\end{align*}
because $\sum_{\sensor} p_{\sensor}(t) = 1$.
Each sensor $\sensor$ activates with probability $\alpha q_{\sensor}(t)$,
so the expected number of activations is 
\[
\E{\alpha \sum_{\sensor} q_{\sensor}(t)} \le \alpha\paren{1 +
  (e-1)/\alpha}\mbox{.}
\]
That proves the claimed bounds in expectation.  To prove bounds with
high probability, note that a sensor activates with probability
$\alpha q_{\sensor}(t)$ in round $t$, where   $q_{\sensor}(t)$ is a
random variable.
Fix $t$.  Let $[\event]$ denote the indicator variable
for the event $\event$, i.e., $[\event] = 1$ if $\event$ occurs, and
$[\event] = 0$ otherwise.  Then we can write 
$[\sensor \text{ activates in round }t] = [\alpha
q_{\sensor}(t) \ge R]$, where $R$ is sampled uniformly at random from
$[0,1]$ and $R$ is independent of $q_{\sensor}(t)$.
Then if $f_R$ is the probability density functions of $R$ we can write 
\[
\begin{aligned}
   \prob{R \le \alpha q_{\sensor}(t)} & = \int_{r = 0}^1 \prob{\alpha
q_{\sensor}(t) \ge R \ |\  R = r} f_R(r) dr\\
  & = \int_{r = 0}^1 \prob{\alpha q_{\sensor}(t) \ge r} f_R(r) dr \\
& = \int_{r = 0}^1 \prob{\alpha q_{\sensor}(t) \ge r}dr \\
& = \E{ \alpha q_{\sensor}(t) }
\end{aligned}
\]
Thus the number of sensor activations is a sum of $|\sensors|$ binary
random variables with cumulative mean 
$\mu := \sum_{\sensor}  \E{ \alpha q_{\sensor}(t) }$.
We have already bounded this mean as $\mu \le \alpha + (e-1)$.
From here a simple application of a Chernoff-Hoeffding bound suffices
to prove that with high probability this sum is at most 
$\cO(\alpha + \log n)$.  Let $A$ be the number of sensor activations.
Then, e.g., Theorem~$5$ of~\cite{CL06} 
immediately implies 
\[
\prob{A \ge \mu(1 + \delta)} \le \exp \paren{ - \frac{\delta^2
    \mu^2}{2 \mu + \frac{2\delta \mu}{3}}}
\]
For $\delta \ge 1$, this yields 
$\prob{A \ge \mu(1 + \delta)} \le \exp \paren{ -
  \frac{3\delta\mu}{8}}$.
Setting $\delta = 1 + \frac{8c\ln n}{3\mu}$ ensures this probability is
at most $n^{-c}$, hence 
$\prob{A \ge 2\mu + \frac{8}{3}c\ln n} \le n^{-c}$.
Noting that $\mu \le \alpha + (e-1)$ completes the high probability
bound on the number of activations.

As for the number of messages, note that each message involves a
sensor as sender or receiver, and by inspection the protocol only
involves two messages per activated node.
\end{proof}

\begin{proof}[of Corollary~\ref{cor:dexp3-performance}.]
Use the distributed \EXPthree protocol with lazy
renormalization with $\alpha = \ln n$.  We have already established
that the probability of nothing being selected is $e^{-\alpha}$ or
$1/n$ in this case.  If nothing is selected, send out $n$ messages,
one to each sensor, to rerun the protocol.  The expected number of
messages sent to initiate additional runs of the protocol is 
$\sum_{x=1}^{\infty} nx/n^{x} = \paren{1 - 1/n}^{-2} = 1 + \cO(1/n)$.
Let $X$ be the number of sensor activations.
As in the proof of
Proposition~\ref{prop:broadcast-activations}, if $Y$ is 
the expected number of sensor activations without rerunning the
protocol when nothing is selected, then 
$\E{X} = \E{Y}/\prob{Y \ge 1}$.  By
Theorem~\ref{thm:dexp3-performance} $\E{Y} \le  \alpha\paren{1 +
  (e-1)/\alpha}$.  Since $\prob{Y \ge 1} = 1 - e^{-\alpha}$, we 
conclude 
\[
\E{X} \le \ln n + (e-1) + \cO\paren{\frac{\ln n}{n}}\mbox{.}
\]
The with-high-probability bounds on the number of sensor activations
are proved as in 
the proof of Corollary~\ref{cor:dexp3-performance}.

As for the number of messages, note that other than 
messages sent to initiate additional runs of the protocol, there are
only two messages per activated node.  Finally, the regret bounds for
distributed \EXPthree are the same as standard \EXPthree because by
design the two algorithms select sensors from exactly the same distribution in each
round.  Note that the distribution in any given round is a random
object depending on the algorithm's choices in the previous rounds,
however on each round the distribution on distributions is the same for both \EXPthree
variants, as can be readily proved by induction on the round number. 
\end{proof}

\section{Algorithm \OGunit with Faulty Actions} \label{sec:faulty}

In order to prove Theorem~\ref{thm:better-lazydog}, we need a
guarantee on the performance of \OGunit if its elements are may fail
to give any benefit.  We provide this in the form of
Theorem~\ref{thm:faulty-sensors}.

Suppose we run \dog with the \LimitProtocol with lazy renormalization,
and do not resample on stages where no sensor activates.
Then with some probability during any given stage $i \in [\K]$, no
sensors activate and the server receives no information.
Suppose that this probability is at most $\delta$ in each stage.
We have shown in section~\ref{sec:sampling} that $\delta \le e^{-\alpha}$ where $\alpha$ is the
oversampling parameter.  We claim that we can compensate for this
possibility by running \dog for $\K/(1-\delta)$ stages in each round
rather than $\K$, because of the following guarantee for \OGunit.

\begin{theorem} \label{thm:faulty-sensors}
Fix finite set $\sensors$, $\K \in \nats$, and a sequence of 
monotone submodular functions $f_1, \ldots, f_T:2^{\sensors} \to [0,1]$.
Let $\OPT_{\K} = \max_{S \subset \sensors, |S| \le \K} \sum_{t=1}^{t} f_t(S)$.
For all $\sensor \in \sensors$ let $\sensor'$ be a random element which is 
$v$ with probability $1-\delta_\sensor$ and is $\NULL$\footnote{Here,
  a $\NULL$ element always contributes nothing in the way of utility,
  so that $f_t(S \cup \set{\NULL}) = f_t(S)$ for all $t$ and $S$.}
 with probability
$\delta_\sensor$.  
Let $f'_t(S') := \E{f_t(S)}$ where $S$ the set obtained
by including every element $\sensor'$ of $S'$ in it independently with
probability $\delta_{\sensor}$.
Let $S'_1, \ldots, S'_T$ be the sequence of random sets obtained from
running \OGunit with actions $\sensors' := \set{\sensor' : \sensor \in
  V}$ and objective functions $\set{f'_t}_{t = 1}^T$ and $\K' =
\K/(1-\delta)$ stages, where $\delta = \max_{\sensor}
\delta_{\sensor}$.
Suppose the algorithms for each stage have expected regret at most $r$.
Then 
\[
\E{\sum_{t=1}^T f'_t(S'_t)} \ge \paren{1-\frac{1}{e}}\OPT_{\K} - \ceil{\frac{\K}{1-\delta}} r\mbox{.}
\]
\end{theorem}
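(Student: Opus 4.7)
The plan is to adapt the Streeter-Golovin analysis of \OGunit so that each stage makes greedy-type progress discounted by a factor of $1-\delta$; running $\K' = \lceil \K/(1-\delta) \rceil$ stages then recovers the $1-1/e$ factor against $\OPT_\K$. A preliminary step is to verify each $f'_t$ is monotone submodular on $\sensors'$: for $A \subseteq B \subseteq \sensors'$ with $\sensor' \notin B$, couple the random realizations $S_A \subseteq S_B$ of $A, B$ and observe that
\[
f'_t(A \cup \set{\sensor'}) - f'_t(A) = (1-\delta_\sensor)\,\E{f_t(S_A \cup \set{\sensor}) - f_t(S_A)},
\]
with the analogous identity for $B$; submodularity of $f_t$ then gives the required pointwise comparison, and monotonicity is analogous.

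The heart of the argument is a per-stage bound. Let $S^*$ attain $\OPT_\K$, let $A_{t,i} \subseteq \sensors'$ be the set after $i$ stages in round $t$, and set $\phi_i := \E{\sum_t f'_t(A_{t,i})}$. For any $S \subseteq \sensors$, submodularity and monotonicity of $f_t$ give $\sum_{\sensor \in S^*}[f_t(S \cup \set{\sensor}) - f_t(S)] \ge f_t(S^*) - f_t(S)$. Applying this on the random realization $S$ of $A_{t,i-1}$, multiplying each summand by $1-\delta_\sensor \ge 1-\delta$, taking expectations, and summing over $t$ yields
\[
\sum_{\sensor \in S^*} \sum_t \bigl[f'_t(A_{t,i-1} \cup \set{\sensor'}) - f'_t(A_{t,i-1})\bigr] \ge (1-\delta)\bigl(\OPT_\K - \phi_{i-1}\bigr),
\]
so by pigeonhole over the $\K$ elements of $S^*$, some $\sensor' \in \sensors'$ has cumulative marginal value at least $\frac{1-\delta}{\K}(\OPT_\K - \phi_{i-1})$ across the $T$ rounds --- exactly the benchmark the stage-$i$ bandit $\cA_i$ competes against (up to regret $r$).

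Combining with the regret bound of $\cA_i$ and taking expectations gives the recurrence $\OPT_\K - \phi_i \le \paren{1 - \frac{1-\delta}{\K}}(\OPT_\K - \phi_{i-1}) + r$; unrolling over $\K'$ stages and invoking the elementary inequality $\paren{1 - (1-\delta)/\K}^{\K/(1-\delta)} \le 1/e$ together with $\K' \ge \K/(1-\delta)$ delivers $\phi_{\K'} \ge (1 - 1/e)\OPT_\K - \K' r$, as claimed. The main obstacle I expect is the per-stage bound: one must simultaneously average over random failures of previously selected elements and apply submodularity on each realization before averaging, so that the uniform factor $1-\delta$ (rather than the element-specific $1-\delta_\sensor$) emerges cleanly and the recurrence remains stage-independent.
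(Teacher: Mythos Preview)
Your argument is correct and at its core is the same as the paper's: both establish that each stage makes expected progress of at least $\frac{1-\delta}{\K}$ times the remaining gap to $\OPT_\K$ (via submodularity on the random realization of the previously chosen set, together with $1-\delta_\sensor \ge 1-\delta$), and then unroll the resulting recurrence over $\K' = \lceil \K/(1-\delta)\rceil$ stages.

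The one structural difference is in presentation rather than substance. The paper first proves the offline statement---that offline greedy on $f'$ with $\K'$ stages gets a $(1-1/e)$ fraction of $f(S^*)$---by deriving $\Phi(i+1)\le \Phi(i)\bigl(1-\tfrac{1-\delta}{\K}\bigr)$ where $\Phi(i)=f(S^*)-\E{f(G'_i)}$, and then defers the passage to the online algorithm (and the appearance of the additive $\K' r$ term) to the ``meta-actions'' analysis of \OGunit in~\cite{streeter07tr}. You instead fold the bandit regret directly into the recurrence, obtaining $\OPT_\K-\phi_i \le \bigl(1-\tfrac{1-\delta}{\K}\bigr)(\OPT_\K-\phi_{i-1})+r$ in one shot. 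Your version is more self-contained; the paper's is more modular. Your explicit verification that each $f'_t$ is monotone submodular is also a useful addition, since it justifies why the \OGunit machinery applies to the primed instance at all (the paper uses this implicitly).
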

 
\begin{proof}
It suffices to prove the analogous result in the offline case; the
``meta-actions'' analysis in~\cite{streeter07tr} can then be used to complete the proof.
So consider a set of elements $\sensors$ and the ``faulty'' versions
$\sensors'$.  Fix a monotone submodular $f:2^{\sensors} \to [0,1]$ and
define $f'$ as above.  Run the offline greedy algorithm on $f'$ to try
to find the best set of $\K' = \frac{\K}{1-\delta}$ elements in $\sensors'$.
Let $g'_i$ be the chosen element in stage $i$, and let $G'_i = \set{g'_j : 1
  \le j \le i}$.  Let $G_i$ denote the realization of $G'_i$ after
sampling, so that $G_i \subseteq \set{g : g' \in G'_i}$.
Let  $S^* = \argmax_{S \subseteq \sensors, |S| \le \K}(f(S))$.
We claim that for all $i$
\[
\Econd{f(G'_{i+1}) - f(G'_i)}{G_i} \ge \paren{1-\delta}
\frac{f(S^*) - f(G_i)}{\K}
\]
because 
\begin{align*}
f(S^*) - f(G_i) & \le f(G_i \cup S^*) - f(G_i)\\
                & \le \sum_{\sensor \in S^*} \paren{f(G_i + \sensor) - f(G_i)}\\
                & \le \K \cdot \max_{\sensor} \paren{f(G_i + \sensor) - f(G_i)}
\end{align*}
and $\max_{\sensor'} \paren{\Econd{f(G'_i + \sensor') - f(G'_i)}{G_i} }$ is at least equal to 
$(1-\delta)\max_{\sensor} \paren{f(G_i + \sensor) - f(G_i)}$.
Removing the conditioning on $G_i$ we get 
\[
\E{f(G'_{i+1}) - f(G'_i)} \ge \paren{1-\delta}\frac{f(S^*) -
  \E{f(G'_i)}}{\K}
\]
Let $\Phi(i) = f(S^*) - \E{f(G'_{i})}$.  The previous equation implies 
$\Phi(i+1) \le \Phi(i)\paren{1 - \frac{1-\delta}{\K}}$.
By induction $\Phi(i) \le f(S^*)\paren{1 - \frac{1-\delta}{\K}}^i$.
Using $1-x \le e^{-x}$ we conclude that $\Phi(\ceil{\K/(1-\delta)})
\le f(S^*)/e$ and $f'(G_{\K'}) \ge \paren{1-\frac{1}{e}}f(S^*)$.
\end{proof}

\noindent
We are now ready to prove Theorem~\ref{thm:better-lazydog}.

\begin{proof}[Theorem~\ref{thm:better-lazydog}]
To bound the number of sensor activations, we note there are 
 $\K' := \ceil{\K/(1 - e^{-\alpha})}$ rounds, and each round activates  
at most $\alpha + (e-1)$ sensors in expectation and $\cO(\alpha + \log
n)$ sensors with high probability by
Theorem~\ref{thm:dexp3-performance} (which proves these bounds in the
higher communication case where we do rerun the \LimitProtocolShort protocol if nothing is
selected).  This, and the fact that $\alpha/(1 - e^{-\alpha}) =
O(\alpha)$ for $\alpha > 0$ yields the claimed activation bounds.
It is an easy observation that the number of messages is at most twice
the number of activations.  Clearly, at most one sensor per stage is
activated, so at most $\K'$ are activated over one round.
Finally, the regret bound follows from
Theorem~\ref{thm:faulty-sensors}, using $\delta = e^{-\alpha}$.
\end{proof}

}{ }

\end{document}